\tikzset{%
  dot/.style n args = {4}{name=#3, circle, draw, inner sep=1pt, minimum size=4pt, fill=black, label={[shift={(#1,#2)}]#4:$#3$}},
  lat/.style n args = {4}{name=#3, circle, draw, inner sep=1pt, minimum size=4pt, label={[shift={(#1,#2)}]#4:$#3$}},
  >={Latex[width=1.5mm,length=2mm]},
  every picture/.style={semithick}
}
\newcommand\thickbar[1]{\accentset{\rule{.4em}{.8pt}}{#1}}
\newcommand\widebar[1]{\accentset{\rule{.8em}{.8pt}}{#1}}
\newcommand\thickubar[1]{\underaccent{\,\rule{.35em}{.8pt}}{#1}}
\def\independenT#1#2{\mathrel{\rlap{$#1#2$}\mkern2mu{#1#2}}}
\newcommand{\+}[1]{\ensuremath{\mathbf{#1}}}
\def\independenT#1#2{\mathrel{\rlap{$#1#2$}\mkern2mu{#1#2}}}
\algnewcommand\algorithmicinput{\textbf{INPUT:}}
\algnewcommand\INPUT{\item[\algorithmicinput]}
\algnewcommand\algorithmicoutput{\textbf{OUTPUT:}}
\algnewcommand\OUTPUT{\item[\algorithmicoutput]}
\newcommand{\doo}{\textrm{do}}
\newcommand{\Pa}{\textrm{Pa}}
\newcommand{\Ch}{\textrm{Ch}}
\newcommand{\An}{\textrm{An}}
\newcommand{\De}{\textrm{De}}
\newcommand{\Co}{\textrm{Co}}
\newcommand{\indep}{\protect\mathpalette{\protect\independenT}{\perp}}
\begin{document}

\title{Enhancing Identification of Causal Effects by Pruning}

\author{\name Santtu Tikka \email santtu.tikka@jyu.fi \\
        \name Juha Karvanen \email juha.t.karvanen@jyu.fi \\
        \addr Department of Mathematics and Statistics \\
        P.O.Box 35 (MaD) FI-40014 University of Jyvaskyla, Finland}

\editor{Peter Spirtes}

\maketitle

\begin{abstract}%
Causal models communicate our assumptions about causes and effects in real-world phenomena. Often the interest lies in the identification of the effect of an action which means deriving an expression from the observed probability distribution for the interventional distribution resulting from the action. In many cases an identifiability algorithm may return a complicated expression that contains variables that are in fact unnecessary. In practice this can lead to additional computational burden and increased bias or inefficiency of estimates when dealing with measurement error or missing data. We present graphical criteria to detect variables which are redundant in identifying causal effects. We also provide an improved version of a well-known identifiability algorithm that implements these criteria.
\end{abstract}

\begin{keywords}
causal inference, identifiability, causal model, pruning, algorithm
\end{keywords}

\section{Introduction} \label{sect:intro}

A formal framework for causal inference is provided by the probabilistic causal model \citep{pearl09} that encodes our knowledge of the variables of interest and their mutual relationships. In observational studies experimentation is not available, but through the causal model framework we can still symbolically intervene on variables, forcing them to take certain values as if an experiment had taken place. The question is whether we can make inferences about the effect of the intervention in the post-intervention model using only the observed probability distribution of the variables in the model before the intervention took place. This question is formally defined as identifiability of causal effects, and it has received considerable attention in literature, including a number of algorithmic solutions \citep{huang06, shpitser06, tian02}.

A causal model can be associated with a directed acyclic graph (DAG) that represents the functional relationships of the variables included in the model. The graphical representation provides us with the concept of d-separation \citep{geiger90}, that can be used to infer conditional independences between variables from the graph. If the distribution of the variables implies no conditional independence statements other than those already encoded in the graph, we say that the distribution is faithful \citep{spirtes00}.

The use of d-separation in the post-intervention model is the basis of do-calculus \citep{pearl95}, which consists of a set of inference rules for manipulating interventional distributions. The purpose of do-calculus is to derive formulas for causal effects and other causal queries, and it has been shown to be complete with respect to the identifiability of causal effects \citep{huang06, shpitser06}. The derived formulas provide recipes for estimating the causal effects from observational data.

When computing causal effect formulas, we often apply an identifiability algorithm, such as the ID algorithm by \citet{shpitser06}. Criteria for identifiability such as the back-door criterion and front-door criterion are available for manual derivations \citep{pearl09} but the ID algorithm is more general and thus more suitable for automated processing. The ID algorithm splits the original problem into smaller subproblems which are then solved and aggregated as the final expression for the causal effect. 

Complicated expressions are likely to arise in situations where we have included variables in our model that do not provide further benefit for the identification of the causal effect of interest. It is often the case that these variables nonetheless appear in the resulting formula, and deriving a simpler expression with the variable eliminated can be non-trivial. It is hard to specify what makes one expression simpler than another, but we can consider a number of criteria to evaluate simplicity. For example, we can compare the number of sums and fractions and the number of variables present in the expression.

In this paper we propose a number of graphical criteria to infer which variables in our causal model are in fact not necessary for identification. These criteria allow us to prune the graph, which in practice means removing specific vertices and considering identification in a latent projection. A significantly simpler expression can be obtained by pruning alone, but we may also combine pruning with simplification procedures that operate symbolically on the interventional distribution as presented in \citep{tikka17b}. Applying these methods in conjunction often provides additional benefits.

We present an identifiability algorithm that is able to recognize and eliminate unnecessary variables from the graph based on our criteria resulting in a simpler expression. When a large number of graphs and identifiability queries are processed, evaluating simpler expressions has apparent computational benefits. First, it is more efficient to evaluate a simpler expression repeatedly especially when some variables have been completely removed which further reduces the complexity of the task. Second, in practical applications that involve real-world data, variables often contain missing data or are affected by bias. Obtaining expression that do not involve such variables can be of great benefit in estimation. Third, a simpler expression is easier to communicate.

An introductory example motivates the use of the improved algorithm. We are interested in the causal effect of $X$ on $Y$ in graph $G$ of Figure~\ref{fig:intro}(\subref{fig:intro_start}). Here, open circles denote unobserved variables. A more in-depth overview of graph theoretic concepts used in this paper is provided in Section~\ref{sect:definitions}.
\begin{figure}[t]
  \centering
  \begin{subfigure}[t]{0.55\linewidth}
\begin{tikzpicture}[xscale=0.98, yscale = 1.2]

\node [dot = {0}{0}{W_1}{above left}] at (0.5,2) {};
\node [dot = {0}{0}{W_2}{below left}] at (0.5,0) {};
\node [dot = {0}{0}{Z_2}{above}] at (2.75,2) {};
\node [dot = {0}{0}{X}{below}] at (2,0) {};
\node [dot = {0}{0}{Z_3}{below}] at (4,1.25) {};
\node [dot = {0}{0}{Z_4}{below} ] at (2.66,0.45) {};
\node [dot = {0}{0}{Z_1}{above right}] at (4.5,0) {};
\node [dot = {0}{0}{Y}{below right}] at (6,0) {};

\node [lat = {0}{0}{U_1}{left}] at (0,1) {};
\node [lat = {0}{0}{U_2}{below}] at (1,0.75) {};
\node [lat = {0}{0}{U_3}{below}] at (1.25,-0.33) {};
\node [lat = {0}{0}{U_4}{below}] at (4,-0.33) {};
\node [lat = {0}{0}{U_5}{above left}] at (2,1.1) {};
\node [lat = {0}{0}{U_6}{above right}] at (4.8,1.45) {};
\node [lat = {0}{0}{U_7}{above}] at (2.85,1.1) {};
\node [lat = {0}{0}{U_8}{above}] at (3.55,0.45) {};

\draw [->] (W_1) -- (W_2);
\draw [->] (W_1) -- (X);
\draw [->] (W_2) -- (X);
\draw [->] (X)   -- (Z_1);
\draw [->] (Z_1) -- (Y);
\draw [->] (Z_2) -- (X);
\draw [->] (Z_2) -- (Z_1);
\draw [->] (Z_4) -- (Z_1);
\draw [->] (Z_2) -- (Z_3);
\draw [->] (Z_3) -- (Y);

\draw[->,dashed] (U_1) to (W_1);
\draw[->,dashed] (U_1) to (W_2);

\draw[->,dashed] (U_2) to (W_1);
\draw[->,dashed] (U_2) to (X);

\draw[->,dashed] (U_3) to (W_2);
\draw[->,dashed] (U_3) to (X);

\draw[->,dashed] (U_4) to (Y);
\draw[->,dashed] (U_4) to (X);

\draw[->,dashed] (U_5) to (Z_2);
\draw[->,dashed] (U_5) to (X);

\draw[->,dashed] (U_6) to (Z_2);
\draw[->,dashed] (U_6) to (Y);

\draw[->,dashed] (U_7) to (X);
\draw[->,dashed] (U_7) to (Z_3);

\draw[->,dashed] (U_8) to (Z_4);
\draw[->,dashed] (U_8) to (Z_1);

\end{tikzpicture}
  \caption{Graph $G$.}
  \label{fig:intro_start}
  \end{subfigure}
  \begin{subfigure}[t]{0.38\linewidth}
\begin{tikzpicture}[xscale=0.98, yscale=1.2]

\node [dot = {0}{0}{Z_2}{above}] at (2.75,2) {};
\node [dot = {0}{0}{X}{below}] at (2,0) {};
\node [dot = {0}{0}{Z_1}{above right}] at (4.5,0) {};
\node [dot = {0}{0}{Y}{below right}] at (6,0) {};

\node [lat = {0}{0}{U_4}{below}] at (4,-0.33) {};
\node [lat = {0}{0}{U_5}{above left}] at (2,1.1) {};
\node [lat = {0}{0}{U_6}{above right}] at (4.8,1.45) {};

\draw [->] (X)   -- (Z_1);
\draw [->] (Z_1) -- (Y);
\draw [->] (Z_2) -- (X);
\draw [->] (Z_2) -- (Z_1);
\draw [->] (Z_2) -- (Y);

\draw[->,dashed] (U_4) to (Y);
\draw[->,dashed] (U_4) to (X);

\draw[->,dashed] (U_5) to (Z_2);
\draw[->,dashed] (U_5) to (X);

\draw[->,dashed] (U_6) to (Z_2);
\draw[->,dashed] (U_6) to (Y);

\end{tikzpicture}
  \caption{Graph $G$ after pruning.}
  \label{fig:intro_pruned}
  \end{subfigure}

  \caption{Graph $G$ before and after pruning for the introductory example.}
  \label{fig:intro}
\end{figure}
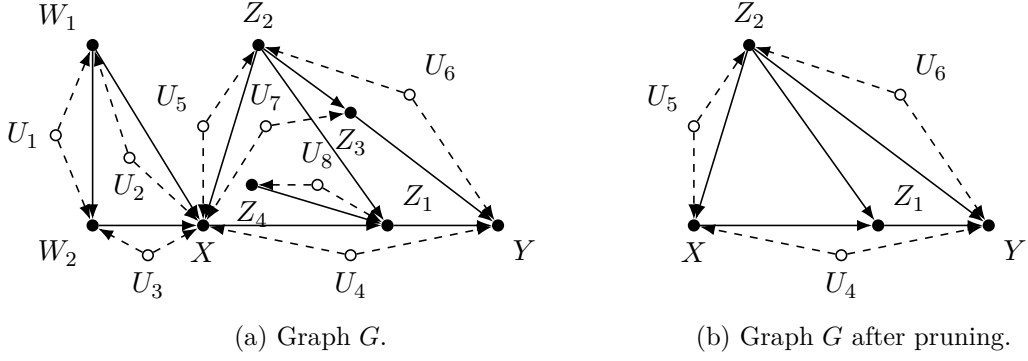
\noindent
The causal effect is identifiable and the output of the ID algorithm is

\begin{align*}
  \sum_{z_2,z_4,z_3,z_1}
    &\left(
       \sum_{w_1,w_2,x^\prime}
         P(y|w_1,z_2,z_4,w_2,z_3,x^\prime,z_1)
         P(x^\prime|w_1,z_2,z_4,w_2,z_3)
    \right.
    \times \\
    &\quad \left.
      \vphantom{\sum_{x^\prime}}
      P(z_3|w_1,z_2,z_4,w_2)
      P(w_2|w_1,z_2,z_4)
      P(z_2|w_1)
      P(w_1)
    \right)
    \left.
      \vphantom{\sum_{x^\prime}} \middle/
    \right.
    \\
    &\left(
      \sum_{w_1,w_2,x^\prime,y^\prime}
        P(y^{\prime}|w_1,z_2,z_4,w_2,z_3,x^\prime,z_1)
        P(x^{\prime}|w_1,z_2,z_4,w_2,z_3)
    \right.
    \times\\
    &\quad \left.
      \vphantom{\sum_{x^\prime}}
      P(z_3|w_1,z_2,z_4,w_2)
      P(w_2|w_1,z_2,z_4)
      P(z_2|w_1)
      P(w_1)
    \right)
    \times \\
    &\left(
      \sum_{w_1,w_2,z_3^\prime,x^\prime,y^\prime}
      P(y^\prime|w_1,z_2,z_4,w_2,z_3^\prime,x^\prime,z_1)
      P(x^\prime|w_1,z_2,z_4,w_2,z_3^\prime)
      \right. \, \times \\
    &\quad \left. 
      P(z_3^\prime|w_1,z_2,z_4,w_2)
      P(w_2|w_1,z_2,z_4)
      P(z_2|w_1)
      P(w_1)
    \vphantom{\sum_{x^\prime}}
    \right)
    \times \\
    &\qquad
    P(z_1|w_1,z_2,z_4,w_2,x)
    P(z_3|z_2)
    P(z_4).
\end{align*}
This expression is very cumbersome and complicated. However, it turns out that a simpler expression exists for the causal effect. By exploiting the structure of the graph and using standard probability calculus the following expression can be obtained
\[
  \sum_{z_2,z_1}\left(\sum_{x^{\prime}}P(y|z_2,z_1,x^{\prime})P(x^{\prime}|z_2)P(z_2)\right)P(z_1|z_2,x).
\]
This expression is simpler in every regard compared to the original output. It contains fewer terms and no fractions. Also, we have completely removed the variables $w_1, w_2$ and $z_4$ from the expression. It can be shown that identifying the causal effect in the original graph is equivalent to identifying it in the graph depicted in Figure~\ref{fig:intro}(\subref{fig:intro_pruned}). By running our improved algorithm we are able to prune the original graph and obtain this simpler expression directly. The algorithm works recursively and the pruning is carried out at each stage of the recursion. The recursive pruning provides significant benefits over pruning as a pre-processing step as demonstrated later.

The paper is structured as follows. In Section~\ref{sect:definitions} we review crucial definitions and concepts related to graph theory and causal models. In Section~\ref{sect:algorithm} we focus on semi-Markovian causal models and present the original formulation of the ID algorithm. Our main results are presented in Section~\ref{sect:improvements} and they are implemented into an improved identifiability algorithm in Section~\ref{sect:augmented}. Examples on the benefits of recursive pruning are provided in Section~\ref{sect:examples}. Section~\ref{sect:disc} concludes with a discussion.

\section{Definitions} \label{sect:definitions}

We assume the reader to be familiar with a number of graph theoretic concepts and refer them to works such as \citep{koller09}. We use capital letters to denote vertices and the respective variables, and small letters to denote their values. Bold letters are used to denote sets. A directed graph with a vertex set $\+ V$ and an edge set $\+ E$ is denoted by $\langle \+ V, \+ E \rangle$. For a graph $G = \langle \+ V, \+ E \rangle$ and a set of vertices $\+ W \subseteq \+ V$ the sets $\Pa(\+ W)_G, \Ch(\+ W)_G, \An(\+ W)_G$ and $\De(\+ W)_G$ denote a set that contains $\+ W$ in addition to its parents, children, ancestors and descendants in $G$, respectively. We also define the set $\Co(\+ W)_G$ to denote the set of vertices that are connected to $\+ W$ in $G$ via paths where the directionality of the edges is ignored, including $\+ W$. The root set of a graph $G$ is the set of vertices without any descendants $\{X \in V \mid \De(X)_G \setminus \{X\} = \emptyset \}$, where $\setminus$ denotes the set difference. A subgraph of a graph $G = \langle \+ V, \+ E \rangle$ induced by a set of vertices $\+ W \subset \+ V$ is denoted by $G[\+ W]$. This subgraph retains all edges $V \rightarrow W$ of $G$ such that $V,W \in \+ W$. The graph obtained from $G$ by removing all incoming edges of $\+ X$ and all outgoing edges of $\+ Z$ is written as $G_{\thickbar{\+ X},\thickubar{\+ Z}}$. To facilitate analysis of causal effects we must first define the probabilistic causal model \citep{pearl09}.

\begin{definition}[Probabilistic Causal Model]
A \emph{probabilistic causal model} is a quadruple
$$ M = \langle \+U,\+V, \+F, P(\+ u) \rangle,$$
where
\begin{enumerate}
\item{$\+U$ is a set of unobserved (exogenous) variables that are determined by factors outside the model.}
\item{$\+V$ is a set $\{V_1,V_2,\ldots,V_n\}$ of observed (endogenous) variables that are determined by variables in $\+U \cup \+ V$.}
\item{$\+F$ is a set of functions $\{f_{V_1},f_{V_2},\ldots,f_{V_n}\}$ such that each $f_{V_i}$ is a mapping from (the respective domains of) $\+U \cup (\+ V \setminus \{V_i\})$ to $V_i$, and such that the entire set $\+ F$ forms a mapping from $\+ U$ to $\+ V$.}
\item{$P(\+ u)$ is a joint probability distribution of the variables in the set $\+ U$.}
\end{enumerate}
\end{definition}

Each causal model induces a causal diagram which is a directed graph that provides a graphical means to convey our assumptions of the causal mechanisms involved. The induced graph is constructed by adding a vertex for each variable in $\+ U \cup \+ V$ and a directed edge from $V_i \in \+ U \cup \+ V$ into $V_j \in \+ V$ whenever $f_{V_j}$ is defined in terms of $V_i$.

Causal inference often focuses on a sub-class of models that satisfy additional assumptions: each $U \in \+ U$ appears in at most two functions of $\+ F$, the variables in $\+ U$ are mutually independent and the induced graph of the model is acyclic. Models that satisfy these additional assumptions are called \emph{semi-Markovian causal models}. A graph associated with a semi-Markovian model is called a \emph{semi-Markovian graph} (SMG). In SMGs every $U \in \+ U$ has at most two children. When semi-Markovian models are considered it is common not to depict background variables in the induced graph explicitly. Unobserved variables with exactly two children are not denoted as $V_i \leftarrow U \rightarrow V_j$ but as a bidirected edge $V_i \leftrightarrow V_j$ instead. Furthermore, unobserved variables with only one or no children are omitted entirely. We also adopt these abbreviations. For SMGs the sets $\Pa(\cdot)_G, \Ch(\cdot)_G, \An(\cdot)_G, \De(\cdot)_G$ and $\Co(\cdot)_G$ contain only observed vertices. Additionally, a subgraph $G[\+ W]$ of an SMG $G$ will also retain any bidirected edges between vertices in $\+ W$.

Any DAG can be associated with an SMG by constructing its \emph{latent projection} \citep{verma93}.

\begin{definition}[latent projection] Let $G = \langle \+ V \cup \+ L, \+ E \rangle$ be a DAG such that the vertices in $\+ V$ are observed and the vertices in $\+ L$ are latent. The \emph{latent projection} $L(G, \+ V)$ is a DAG $\langle \+ V, \+ E_L \rangle$, where for every pair of distinct vertices $Z,W \in \+ V$ it holds that:

\begin{enumerate}
 \item{$L(G, \+ V)$ contains an edge $Z \rightarrow W$ if there exists a directed path $Z \rightarrow \cdots \rightarrow W$ in $G$ on which every vertex except $Z$ and $W$ is in $\+ L$.}
 \item{$L(G, \+ V)$ contains an edge $Z \leftrightarrow W$ if there exists a path from $Z$ to $W$ in $G$ that does not contain the pattern $Z \rightarrow M \leftarrow W$ (a collider) and on which every vertex except $Z$ and $W$ is in $\+ L$ and the first edge has an arrowhead pointing into $W$ and the last edge has an arrowhead pointing into $Z$.}
\end{enumerate}

\end{definition}
From the construction it is easy to see that a latent projection is in fact an SMG. The induced graph of a probabilistic causal model can also be used to derive conditional independences among the variables in the model using a concept known as d-separation. We provide a definition for d-separation \citep{shpitser08} which takes into account the presence of bidirected edges and is thus suitable for SMGs.

\begin{definition}[d-separation] A path $P$ in an SMG $G$ is said to be d-separated by a set $\+ Z$ if and only if either
\begin{enumerate}
\item{$P$ contains one of the following three patterns of edges: $I \rightarrow M \rightarrow J$, $I \leftrightarrow M \rightarrow J$ or $I \leftarrow M \rightarrow J$, such that $M \in \+ Z$, or}
\item{$P$ contains one of the following three patterns of edges: $I \rightarrow M \leftarrow J$, $I \leftrightarrow M \leftarrow J$, $I \leftrightarrow M \leftrightarrow J$, such that $\De(M)_G \cap \+ Z = \emptyset$.}
\end{enumerate}
Disjoint sets $\+ X$ and $\+ Y$ are said to be d-separated by $\+ Z$ in $G$ if every path from $\+ X$ to $\+ Y$ is d-separated by $\+ Z$ in $G$.
\end{definition}

Whenever we can decompose the joint distribution of the observed variables $\+ V$ and the unobserved variables $\+ U$ as $P(\+ v, \+ u) = \prod_{W \in \+ V \cup \+ U} P(w|\Pa^*(w)_G)$, where $\Pa^*(\cdot)$ also contains the unobserved parents but not the argument itself, we say that $G$ is an I-map of $P(\+ v, \+ u)$ \citep{pearl09}. If sets $\+ X$ and $\+ Y$ are d-separated by $\+ Z$ in G, then $\+ X$ is independent of $\+ Y$ given $\+ Z$ in every $P$ for which $G$ is an I-map \citep{pearl88}. We use the notation of \citep{dawid79} to denote this d-separation and conditional independence statement as $(\+ X \indep \+ Y | \+ Z)_G$. It is clear that the graph induced by any semi-Markovian causal model is an I-map for the joint distribution $P(\+ v, \+ u)$ induced by the model.

Our interest lies in the effects of actions imposing changes to the model. An action that forces $\+ X$ to take a specific value $\+ x$ is called an \emph{intervention} and it is denoted by $\doo(\+ x)$ \citep{pearl09}. An intervention $\doo(\+ x)$ on a model $M$ creates a new sub-model, denoted by $M_{\+ x}$, where the functions in $\+ F$ that determine the value of $\+ X$ have been replaced with constant functions. The \emph{interventional distribution} of a set of variables $\+ Y$ in the model $M_{\+ x}$ is denoted by $P_{\+ x}(\+ y)$. This distribution is also known as the \emph{causal effect} of $\+ X$ on $\+ Y$.

Multiple causal models can share the same graph, and thus the same sub-model resulting from an intervention. The question is, are our assumptions encoded in the causal model sufficient to uniquely specify an interventional distribution of interest. This notion is captured by the following definition \citep{shpitser06}.

\begin{definition}[identifiability] Let $G = \langle \+ V, \+ E \rangle$ be an SMG and let $\+ X$ and $\+ Y$ be disjoint sets of variables such that $\+ X, \+ Y \subset \+ V$. The causal effect of $\+ X$ on $\+ Y$ is said to be \emph{identifiable} from $P$ in $G$ if $P_{\+ x}(\+ y)$ is uniquely computable from $P(\+ V)$ in any causal model that induces $G$.
\end{definition}

In order to show the identifiability of a given effect we have to express the interventional distribution in terms of observed probabilities only. The link between observed probabilities and interventional distributions is provided by three inference rules known as \emph{do-calculus} \citep{pearl95}:

\begin{enumerate}
\item{Insertion and deletion of observations: 
$$P_{\+x}(\+ y|\+ z, \+ w) = P_{\+x}(\+ y| \+ w), \text{ if } (\+ Y \indep \+Z|\+X, \+ W)_{G_{\thickbar{\+ X}}}.$$}
\item{Exchanging actions and observations:
$$P_{\+x,\+ z}(\+ y|\+ w) = P_{\+x}(\+ y|\+ z, \+ w), \text{ if } (\+ Y \indep \+Z|\+X, \+ W)_{G_{\thickbar{\+ X},\thickubar{\+ Z}}}.$$ }
\item{Insertion and deletion of actions:
$$P_{\+x,\+ z}(\+ y|\+ w) = P_{\+x}(\+ y|\+ w), \text{ if } (\+ Y \indep \+Z|\+X, \+ W)_{G_{\thickbar{\+ X},\widebar{Z(\+ W)}}},$$ 
where $ Z(\+W) = \+Z \setminus \An(\+ W)_{G_{\thickbar{\+ X}}}.$ }
\end{enumerate}

Completeness of do-calculus was established independently by \citet{huang06} and \citet{shpitser06}. In this paper we focus on the solution provided by \citet{shpitser06}. They constructed an identifiability algorithm called ID, which in essence applies the rules of do-calculus and breaks the problem into smaller sub-problems repeatedly.

\section{ID Algorithm} \label{sect:algorithm} 

In order to present the ID algorithm, we first need some additional definitions that are used to construct the graphical criterion for non-identifiability \citep{shpitser06}.

\begin{definition}[C-component] Let $G$ be an SMG and let $C \subseteq G$. If every pair of vertices in $C$ is connected by a bidirected path, that is a path consisting entirely of bidirected edges, then $C$ is a \emph{C-component} (confounded component). Furthermore, $C$ is a \emph{maximal C-component} if $C$ contains every vertex connected to $C$ via bidirected paths in $G$ and $C$ is an induced subgraph of $G$.
\end{definition}

No restrictions are imposed on the directed edges of a C-component. The same is not true for the maximal C-components (also known as districts) of an SMG $G$, which are assumed to be induced subgraphs of $G$. This requirement guarantees the uniqueness of the maximal C-components.

Maximal C-components are an important tool for identifying causal effects. The set of maximal C-components of a semi-Markovian graph $G$ is denoted by $C(G)$. A result in \citep{tian02phd} states that if $C = \langle \+ C, \+ E \rangle$ is a maximal C-component and $C \subset G$ then the causal effect $P_{\+ v \setminus \+ c}(\+ c)$ is identifiable from $P$ in $G$. A distribution $P$ of a semi-Markovian model also factorizes with respect to the maximal C-components of the induced graph $G$ such that $P(\+ v) = \prod_{\langle \+ C, \+ E \rangle \in C(G)} P_{\+ v \setminus \+ c}(\+ c)$ \citep{shpitser06}. It is precisely this factorization that the ID algorithm takes advantage of. A specific type of C-component is used to characterize problematic structures for identifiability.

\begin{definition}[C-forest] Let $G$ be an SMG and let $\+ Y$ be the root set of $G$. If $G$ is a C-component and all observed vertices have at most one child, then $G$ is a $\+ Y-$\emph{rooted C-forest}.
\end{definition}

The complete criterion for non-identifiability uses a structure formed by two C-forests:

\begin{definition}[hedge] Let $\+ X, \+ Y \subset \+ V$ be disjoint sets of variables and let $G$ be an SMG. Let $F = \langle \+ V_F, \+ E_F \rangle$ and $F^\prime = \langle \+ V_{F^\prime}, \+ E_{F^\prime} \rangle$ be $\+ R$-rooted C-forests in $G$ such that $\+ V_F \cap \+ X \neq \emptyset$, $\+ V_{F^\prime} \cap \+ X = \emptyset$, $F^\prime \subseteq F$, and $\+ R \subseteq \An(\+ Y)_{G_{\thickbar{\+ X}}}$. Then $F$ and $F^\prime$ form a hedge for $P_{\+ x}(\+ y)$ in $G$.
\end{definition}

Intuitively hedges are a difficult concept. Whenever a hedge is present, there exists two causal models with the same probability distribution over $\+ V$ but their interventional distributions do not agree. Observational data can not be used to estimate causal effects in this scenario. We are now ready to present the ID algorithm.

\begin{algorithm}[t]
  \begin{algorithmic}[1]
    \INPUT{Value assignments $\+ x$ and $\+ y$, joint distribution $P(\+ v)$ and an SMG $G = \langle \+ V, \+ E \rangle$. $G$ is an $I$-map of $P$.}
    \OUTPUT{Expression for $P_{\+ x}(\+ y)$ in terms of $P(\+ v)$ or \textbf{FAIL}$(F,F^\prime)$.}
    \Statex
    \Statex \textbf{function}{ \textbf{ID}$(\+ y, \+ x, P, G)$ }
      \State \textbf{if}{ $\+ x = \emptyset$, }
      \Statex \quad \textbf{return} $\sum_{v \in \+ v \setminus \+ y}P(\+ v)$.
      \State \textbf{if}{ $\+ V \neq \An(\+ Y)_G$, }
        \Statex \quad \textbf{return}{ \textbf{ID}$(\+ y, \+ x \cap \An(\+ y)_G, P(\An(\+ Y)_G), G[\An(\+ Y)_G])$.}
      \State \textbf{let}{ $\+ W = (\+ V \setminus \+ X) \setminus \An(\+ Y)_{G_{\thickbar {\+ X}}}$.} 
      \Statex \textbf{if}{ $\+ W \neq \emptyset $, }
      \Statex \quad \textbf{return} \textbf{ID}$(\+ y, \+ x \cup \+ w, P, G)$.
      \State \textbf{if}{ $C(G[\+ V \setminus \+ X]) = \{G[\+ S_1], \ldots,G[\+ S_k]\}$, }
        \Statex \quad \textbf{return}{ $\sum_{v \in \+ v \setminus (\+ y \cup \+ x)} \prod_{i=1}^k$ \textbf{ID}($\+ s_i, \+ v \setminus \+ s_i, P, G)$.}
      \Statex \textbf{if}{ $C(G[\+V \setminus \+ X]) = \{G[\+ S]\}$,}
        \State \quad \textbf{if}{ $C(G) = \{G\}$,}
        \Statex \quad \quad \textbf{throw FAIL}{$(G, G[\+ S])$.}
        \State \quad \textbf{if}{ $G[\+ S] \in C(G)$,}
        \Statex \quad \quad \textbf{return} {$\sum_{v \in \+ s \setminus \+ y} \prod_{V_i \in \+ S}{P(v_i \vert v_\pi^{(i-1)})}$.}
        \State \quad \textbf{if}{ $(\exists \+ S^\prime)\+ S \subset \+ S^\prime \textbf{ such that } G[\+ S^\prime] \in C(G)$,}
        \Statex \quad \quad \textbf{return} {\textbf{ID}$(\+ y, \+ x \cap \+ s^\prime, \prod_{V_i \in \+ S^\prime}{P(V_i \vert V_\pi^{(i-1)} \cap \+ S^\prime,v_\pi^{(i-1)} \setminus \+ s^\prime), G[\+ S^\prime}])$.}
  \end{algorithmic}
  \caption{The causal effect of intervention $do(\+ X = \+ x)$ on $\+ Y$ (ID).}
  \label{alg:identify}
\end{algorithm}

\citet{shpitser06} showed that whenever Algorithm~\ref{alg:identify} returns an expression for a causal effect, it is correct. Additionally whenever line 5 is triggered there exists a hedge for the causal effect currently being identified. This result establishes the completeness of the algorithm and also the completeness of do-calculus, since the soundness of each line of the algorithm can be shown with do-calculus and standard probability calculus alone.

\section{Pruning of Variables} \label{sect:improvements}
In this section we present a number of results that deal with variables that are not necessary for identification either by removing them from the graph or by considering them latent. When the causal effect $P_{\+ x}(\+ y)$ is considered in an SMG we can present an outline of the pruning process:
\begin{enumerate}
  \item Removal of non-ancestors of $\+ Y$.
  \item Removal of ancestors of $\+ X$ that are connected to $\+ Y$ only via $\+ X$ under certain conditions.
  \item Removal of vertices connected to other vertices only through a single vertex.
  \item Identification in a latent projection under certain conditions.
\end{enumerate}
Steps 2--4 are new and they are based on the results of this section. Step 1 is derived from a useful result by \citet{shpitser06} which states that for a causal effect $P_{\+ x}(\+ y)$ we can always ignore non-ancestors of $\+ Y$. 

\begin{lemma} \label{lem:ancestors} Let $\+ X^\prime = \+ X \cap \An(\+ Y)_G$. Then $P_{\+ x}(\+y)$ obtained from $P$ in $G$ is equal to $P_{\+ x^\prime}^\prime(\+ y)$ obtained from $P^\prime = P(\An(\+ Y)_G)$ in $G[\An(\+ Y)_G]$.
\end{lemma}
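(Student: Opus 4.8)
The plan is to prove the claimed equality in two stages. First I would show that intervening on the non-ancestors contributes nothing, i.e. $P_{\+ x}(\+ y) = P_{\+ x^\prime}(\+ y)$ in $G$, where the redundant interventions on $\+ Z := \+ X \setminus \+ X^\prime$ are deleted. Second I would show that this common quantity is reproduced in the reduced problem, so that $P_{\+ x^\prime}(\+ y)$ computed from $P$ in $G$ equals $P^\prime_{\+ x^\prime}(\+ y)$ computed from $P^\prime = P(\An(\+ Y)_G)$ in $G[\An(\+ Y)_G]$. The structural fact underpinning both stages is that $\An(\+ Y)_G$ is \emph{ancestrally closed}: every parent of an ancestor of $\+ Y$ is again an ancestor of $\+ Y$, so no directed edge enters $\An(\+ Y)_G$ from outside, and each vertex of $\+ Z$ is a genuine non-ancestor of $\+ Y$.

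For the first stage I would appeal to Rule 3 of do-calculus to delete $\doo(\+ z)$ from $P_{\+ x^\prime, \+ z}(\+ y)$. With $\+ W = \emptyset$ we have $Z(\+ W) = \+ Z \setminus \An(\emptyset)_{G_{\thickbar{\+ X^\prime}}} = \+ Z$, so the relevant mutilation is $G_{\thickbar{\+ X^\prime}, \widebar{\+ Z}}$, which simply removes every incoming edge of $\+ X^\prime \cup \+ Z = \+ X$; that is, it is $G_{\thickbar{\+ X}}$. It then suffices to verify $(\+ Y \indep \+ Z \mid \+ X^\prime)_{G_{\thickbar{\+ X}}}$. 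Here every vertex of $\+ X$ is a pure source (no incoming directed or bidirected edges), so no vertex of $\+ X^\prime$ can be a collider or a descendant of a collider; conditioning on $\+ X^\prime$ therefore only blocks forks and never opens a collider. Consequently any path from $\+ Z$ to $\+ Y$ that is active given $\+ X^\prime$ would have to be collider-free, and a collider-free path emanating from a source is forced to be a directed path $Z \rightarrow \cdots \rightarrow Y$, contradicting $Z \notin \An(\+ Y)_G$. Hence the $d$-separation holds and Rule 3 yields $P_{\+ x}(\+ y) = P_{\+ x^\prime}(\+ y)$.

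For the second stage I would pass to a reduced semi-Markovian model $M^\prime$ on the vertex set $\An(\+ Y)_G$, retaining the structural functions $\{f_A : A \in \An(\+ Y)_G\}$ together with exactly those exogenous variables that feed into $\An(\+ Y)_G$. Ancestral closure guarantees that these functions reference no vertex outside $\An(\+ Y)_G$, so $M^\prime$ is well defined; its observed distribution is the marginal $P(\An(\+ Y)_G) = P^\prime$, and because $\+ X^\prime \subseteq \An(\+ Y)_G$ the submodels $M_{\+ x^\prime}$ and $M^\prime_{\+ x^\prime}$ assign identical values to every vertex of $\An(\+ Y)_G$, giving $P^\prime_{\+ x^\prime}(\+ y) = P_{\+ x^\prime}(\+ y)$. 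Chaining the two stages proves the lemma.

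The step I expect to be the main obstacle is verifying that the graph induced by $M^\prime$ is precisely $G[\An(\+ Y)_G]$, i.e. that taking the induced subgraph correctly implements marginalising out the non-ancestors. One must check that no spurious bidirected edge appears among ancestors and none is lost: a latent shared by an ancestor and a non-ancestor becomes a single-child (hence omitted) latent after marginalisation, matching the dropped edge in $G[\An(\+ Y)_G]$, while a bidirected path $A_1 \leftrightarrow B \leftrightarrow A_2$ through a non-ancestor $B$ carries a collider at $B$ and so induces no edge $A_1 \leftrightarrow A_2$, again matching the induced subgraph. Confirming that these cases exhaust all possibilities, which uses ancestral closure to rule out $B \rightarrow A_2$ with $B \notin \An(\+ Y)_G$, is the delicate part of the argument.
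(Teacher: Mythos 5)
Your proof is correct. Note that the paper itself states Lemma~1 without proof, attributing it to Shpitser and Pearl (2006); your two-stage argument --- Rule~3 of do-calculus to delete the interventions on $\+ Z = \+ X \setminus \An(\+ Y)_G$ (with the d-separation $(\+ Y \indep \+ Z \mid \+ X^\prime)_{G_{\thickbar{\+ X}}}$ verified via the source-vertex/collider analysis), followed by the observation that the retained structural functions and their exogenous parents form a valid reduced model whose observed distribution is $P(\An(\+ Y)_G)$ and whose induced graph is exactly $G[\An(\+ Y)_G]$ --- is precisely the argument of that cited source, with the delicate points (including the ancestral-closure check that no spurious bidirected edge arises, which your deletion-based construction of $M^\prime$ resolves) spelled out in more detail.
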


Lemma~\ref{lem:ancestors} is implemented on line 2 of Algorithm~\ref{alg:identify}.
Not all ancestors of $\+ Y$ are always necessary for identification. The next result states that we may sometimes remove ancestors of $\+ X$ that are connected to $\+ Y$ only through $\+ X$.

\begin{theorem} \label{thm:Xancestors} Let $G$ be an SMG and let $\+ Z \subset \+ V$ be the set of all vertices such that $\+ X$ intercepts all paths from $\+ Z$ to $\+ Y$. Then the causal effect $P_{\+ x}(\+ y)$ obtained from $P$ in $G$ is equal to $P^\prime_{\+ x}(\+ y)$ obtained from  $P^\prime = P(\+ V \setminus \+ Z)$ in $G[\+ V \setminus \+ Z]$ if $\+ Z$ contains no members of $\+ X$ and if $G[\+ V \setminus \+ Z] = L(G, \+ V \setminus \+ Z)$.
\end{theorem}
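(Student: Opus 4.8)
The plan is to establish the claim at the level of the underlying causal models: I would show that any model inducing $G$ and the corresponding model over $\+ V \setminus \+ Z$ (obtained by marginalizing $\+ Z$) assign the same value to $P_{\+ x}(\+ y)$, and that this value remains identifiable after the removal, so that the two identification outputs must coincide as functionals of the data. The two hypotheses play distinct roles and I would keep them separate throughout. The interception property is what makes $\+ Z$ irrelevant to $\+ Y$ once we intervene on $\+ X$, whereas the condition $G[\+ V \setminus \+ Z] = L(G, \+ V \setminus \+ Z)$ is what guarantees that discarding $\+ Z$ introduces no spurious edges among the remaining vertices, so that the induced subgraph is an honest representation of the marginal model.

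First I would convert the interception hypothesis into a d-separation in the mutilated graph. In $G_{\thickbar{\+ X}}$ the vertices of $\+ X$ have no incoming edges, since both incoming directed edges and bidirected edges into $\+ X$ (which stem from latent parents of $\+ X$) are removed. Any path from $Z \in \+ Z$ to $Y \in \+ Y$ in $G_{\thickbar{\+ X}}$ is also a path in $G$ and therefore meets $\+ X$; at any $X \in \+ X$ on it, which is necessarily an interior vertex because the endpoints lie in $\+ Z$ and $\+ Y$ and hence outside $\+ X$, both incident edges must point out of $X$, forming the non-collider $\cdot \leftarrow X \rightarrow \cdot$. As $X$ belongs to the conditioning set, this blocks the path, so every such path is blocked and $(\+ Y \indep \+ Z \mid \+ X)_{G_{\thickbar{\+ X}}}$. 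By the first rule of do-calculus this gives $P_{\+ x}(\+ y \mid \+ z) = P_{\+ x}(\+ y)$, the probabilistic statement that the mechanisms generating $\+ Z$ cannot affect $\+ Y$ in the submodel $M_{\+ x}$.

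Next I would set up the model-level comparison. Fix a causal model $M$ inducing $G$ with observed distribution $P$, and let $M'$ arise from $M$ by treating $\+ Z$ as latent, that is, by recursively substituting the structural equations of $\+ Z$ into those of their children. The observed distribution of $M'$ is exactly $P' = P(\+ V \setminus \+ Z)$, and its SMG is the latent projection $L(G, \+ V \setminus \+ Z)$, which by hypothesis equals $G[\+ V \setminus \+ Z]$; thus $M'$ induces $G[\+ V \setminus \+ Z]$. The solution for $\+ Y$ in the mutilated model $M_{\+ x}$ is literally the same function of the exogenous variables in $M$ and in $M'$, because forming $M'$ only folds the equations of $\+ Z$ into their children and leaves the equation of every other variable, in particular of every ancestor of $\+ Y$ outside $\+ Z$, unchanged. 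Hence $P_{\+ x}(\+ y) = P'_{\+ x}(\+ y)$ as values, and identifying in $G[\+ V \setminus \+ Z]$ from $P'$ targets this same quantity while, since $\+ Z$ is absent from that graph, producing an expression free of $\+ Z$.

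The main obstacle is that passing from $P(\+ V)$ to the marginal $P'$ discards information and can in general destroy identifiability, as happens when a front-door mediator is hidden; the theorem asserts that this does not occur here, and proving it is exactly where both hypotheses become indispensable. Interception, via the d-separation above, is what certifies that $\+ Z$ lies strictly upstream of the intervention and is never needed, so that any identifying functional for $P_{\+ x}(\+ y)$ in $G$ can have its dependence on $\+ Z$ eliminated, leaving a functional of $P'$ alone that coincides with the output of identification in $G[\+ V \setminus \+ Z]$. The latent projection equality rules out the remaining danger that some $Z \in \+ Z$ covertly mediates a directed path or confounds a pair of retained vertices, which would place an edge in $L(G, \+ V \setminus \+ Z)$ absent from the induced subgraph and make the latter an invalid stand-in for the marginal model. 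I would close by noting that the preceding equalities hold for every model inducing $G$ and that, conversely, every admissible $P'$ on $\+ V \setminus \+ Z$ extends to such a model, so the two identification outputs agree as functionals of the observed distribution.
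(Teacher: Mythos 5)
Your proposal reaches the same model-level equality as the paper but by a genuinely different route. The paper's proof never leaves the level of distributions: it applies rule 3 of do-calculus to replace $P_{\+ x}(\+ y)$ by $P_{\+ x, \+ z}(\+ y)$, expands this with the truncated factorization formula over $\+ V \cup \+ U$, partitions the latents into $\+ U_{\+ Z}$, $\+ U_{\+ X}$ and $\+ U_{\+ V \setminus \+ Z}$ (a partition that exists precisely because of interception and the maximality of $\+ Z$), sums out $\+ U_{\+ Z}$ and $\+ U_{\+ X}$, and then invokes $G[\+ V \setminus \+ Z] = L(G, \+ V \setminus \+ Z)$ to recognize the surviving factors as the truncated factorization defining $P^\prime_{\+ x}(\+ y)$ in the subgraph. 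You instead argue structurally: fold the equations of $\+ Z$ into their children to obtain a marginal model $M^\prime$, note that $M^\prime$ has observational distribution $P^\prime$ and (by the latent-projection hypothesis and compositionality of projections) projects onto $G[\+ V \setminus \+ Z]$, and conclude that the solution for $\+ Y$ under $do(\+ x)$ is unchanged. Both arguments establish $P_{\+ x}(\+ y) = P^\prime_{\+ x}(\+ y)$ for every model inducing $G$, which is exactly the content of the theorem; your version makes the role of the two hypotheses more transparent, while the paper's stays self-contained within do-calculus and probability calculus. Incidentally, your rule-1 deduction in the second paragraph, while correct, is never used by your own argument; the analogous load-bearing step in the paper is rule 3, not rule 1.

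Two places need tightening. First, the pivotal claim that forming $M^\prime$ ``leaves the equation of every ancestor of $\+ Y$ outside $\+ Z$ unchanged'' is not automatic and, as written, is false for children of $\+ Z$ that lie in $\+ X$ (their equations do change; this is harmless only because $do(\+ x)$ cuts them, which you should say). What you actually need is that no changed equation belongs to a non-$\+ X$ ancestor of $\+ Y$ in $G_{\thickbar{\+ X}}$, and this is a one-line consequence of interception: if $W \notin \+ Z \cup \+ X$ were a child of some $Z \in \+ Z$ and had a directed path to $\+ Y$ avoiding $\+ X$, prepending $Z \rightarrow W$ (or taking the suffix from $Z$) would give a path from $Z$ to $\+ Y$ avoiding $\+ X$, contradicting $Z \in \+ Z$. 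Second, your final paragraph makes the task harder than the theorem requires and then leaves it unfinished: the claim that any identifying functional in $G$ ``can have its dependence on $\+ Z$ eliminated'' is a restatement of the goal rather than an argument, and the claim that every admissible $P^\prime$ extends to a model inducing $G$ is asserted without proof. Neither is needed. Once you have the model-level equality for every model inducing $G$, together with the fact that $M^\prime$ projects onto $G[\+ V \setminus \+ Z]$, soundness of identification in the pruned graph immediately transfers any expression obtained there back to the original problem, which is the only direction the theorem is used for (and the only thing the paper's own proof establishes). Replacing your last paragraph by this one-directional remark would close the argument.
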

\begin{proof} 
Let $G^\prime = G[\+ V \setminus \+ Z]$ and assume that $G^\prime = L(G, \+ V \setminus \+ Z)$. Let $\+ U_{\+ Z}$, $\+ U_{\+ V \setminus \+ Z}$ and $\+ U_{\+ X}$ be sets of unobserved variables such that for all $U \in \+ U_{\+ Z}$ it holds that $\Ch(U)_{G_{\thickbar{\+ X}}} \subseteq \+ Z$, for all $U \in \+ U_{\+ V \setminus \+ Z}$ it holds that $\Ch(U)_{G_{\thickbar{\+ X}}} \subseteq \+V \setminus \+ Z$ and for all $U \in \+ U_{\+ X}$ it holds that $\Ch(U)_G \in \+ X$. The sets $\+ U_{\+ Z}$, $\+ U_{\+ V \setminus \+ Z}$ and $\+ U_{\+ X}$ partition $\+ U$ because $\+ X$ intercepts all paths from $\+ Z$ to $\+ Y$. 
According to the third rule of do-calculus $P_{\+ x}(\+ y) = P_{\+ x, \+ z}(\+ y)$ because the condition $(\+ Y \indep \+ Z \vert \+ X)_{G_{\thickbar{\+ X}}}$ holds as removing the edges incoming to $\+ X$ separates $\+ X$ from its ancestors. Applying the truncated factorization formula \citep{pearl09} we have that
\[
  P_{\+ x, \+ z}(\+ y) = \sum_{\+ U} \sum_{\+ V \setminus (\+ Y \cup \+ X \cup \+ Z)} \prod_{\+ V \setminus (\+ X \cup \+ Z)} P(v_i|\Pa(v_i)_G \setminus \{v_i\}) \prod_{\+ U} P(u_i).
\]
Since variables in $\+ U_{\+ Z}$ can only be parents of variables in $\+ Z$ or $\+ X$ in $G$, we can sum them out from the previous expression and obtain
\[
  P_{\+ x, \+ z}(\+ y) = \sum_{\+ U_{\+ V \setminus \+ Z} \cup \+ U_{\+ X}} \sum_{\+ V \setminus (\+ Y \cup \+ X \cup \+ Z)} \prod_{\+ V \setminus (\+ X \cup \+ Z)} P(v_i|\Pa(v_i)_G \setminus \{v_i\}) \prod_{\+ U_{\+ V \setminus \+ Z} \cup \+ U_{\+ X}} P(u_i).
\]
Similarly, variables in $\+ U_{\+ X}$ can only be parents of variables in $\+ X$ in $G$, so we can also sum them out of the expression to obtain
\[
  P_{\+ x, \+ z}(\+ y) = \sum_{\+ U_{\+ V \setminus \+ Z}} \sum_{\+ V \setminus (\+ Y \cup \+ X \cup \+ Z)} \prod_{\+ V \setminus (\+ X \cup \+ Z)} P(v_i|\Pa(v_i)_G \setminus \{v_i\}) \prod_{\+ U_{\+ V \setminus \+ Z}} P(u_i).
\]
We let $\+ V^\prime = \+ V \setminus \+ Z$. \citet{verma93} showed that a graph and its latent projection have the same set of conditional independence relations among the observed variables. Because we have assumed that $G^\prime = L(G, \+ V \setminus \+ Z)$ every conditional independence between variables in $\+ V^\prime$ and $\+ U_{\+ V^\prime}$ applies in both $G$ and $G^\prime$. We have that for all $V_i \in \+ V^\prime \setminus \+ X$ it holds that $P(v_i|\Pa(v_i)_G \setminus \{v_i\}) = P^\prime(v_i|\Pa(v_i)_{G^\prime} \setminus \{v_i\})$ and for all $U_i \in \+U_{\+ V^\prime}$ it holds that $P(u_i) = P^\prime(u_i)$. Finally we obtain
\[
  P_{\+ x, \+ z}(\+ y) = \sum_{\+ U_{\+ V^\prime}} \sum_{\+ V^\prime \setminus (\+ Y \cup \+ X)} \prod_{\+ V^\prime \setminus \+ X} P^\prime(v_i|\Pa(v_i)_{G^\prime} \setminus \{v_i\}) \prod_{\+ U_{\+ V^\prime}} P^\prime(u_i) = P^\prime_{\+ x}(\+ y).
\]
\end{proof}
Theorem~\ref{thm:Xancestors} can also be applied in a more general setting where a subset of $\+ X$ intercepts all paths from a set $\+ Z$ to $\+ Y$
\begin{corollary} \label{cor:extend}
Let $G$ be an SMG  and let $\+ Z \subset \+ V$ be the set of all vertices such that a set $\+ W \subseteq \+ X$ intercepts all paths from $\+ Z$ to $\+ Y$ and no member of $\+ X \setminus \+ W$ is a descendant of $\+ W$. Then the causal effect $P_{\+ x}(\+y)$ obtained from $P$ in $G$ is equal to $P^\prime_{\+ x \setminus \+ z}(\+ y)$ obtained from  $P^\prime = P(\+ V \setminus \+ Z)$ in $G[\+ V \setminus \+ Z]$ if $\+ Z$ contains no members of $\+ W$ and if $G[\+ V \setminus \+ Z] = L(G, \+ V \setminus \+ Z)$.
\end{corollary}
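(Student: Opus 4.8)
The plan is to mirror the proof of Theorem~\ref{thm:Xancestors}, but with the intercepting set $\+ W$ taking over the role played there by the full intervention set $\+ X$. Write $\+ W' = \+ X \setminus \+ W$, and split $\+ Z$ into the part $\+ Z_X = \+ Z \cap \+ X$ already intervened upon (which by hypothesis lies in $\+ W'$, since $\+ Z$ contains no member of $\+ W$) and the part $\+ Z_0 = \+ Z \setminus \+ X$. The structural fact that makes the conclusion come out as $P^\prime_{\+ x \setminus \+ z}(\+ y)$ rather than $P^\prime_{\+ x}(\+ y)$ is the identity $\+ X \cap (\+ V \setminus \+ Z) = \+ X \setminus \+ Z$: once $\+ Z$ is pruned, the members of $\+ X$ that survive in $G[\+ V \setminus \+ Z]$ are exactly those of $\+ X \setminus \+ Z$, so the residual intervention is necessarily $\doo(\+ x \setminus \+ z)$.

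First I would use the third rule of do-calculus to add interventions on $\+ Z_0$, establishing $P_{\+ x}(\+ y) = P_{\+ x, \+ z_0}(\+ y)$; the required statement is $(\+ Y \indep \+ Z_0 \mid \+ X)_{G_{\thickbar{\+ X}, \widebar{\+ Z_0}}}$. This holds because $\+ W \subseteq \+ X$ intercepts every path from $\+ Z$ (hence from $\+ Z_0$) to $\+ Y$: any path surviving in $G_{\thickbar{\+ X}}$ is a path of $G$ and therefore still meets some $W_i \in \+ W$, which in $G_{\thickbar{\+ X}}$ has no incoming edges and so occurs as a fork, blocked by conditioning on $\+ X \supseteq \+ W$. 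Note that although conditioning on $\+ X$ may open a collider sitting at a vertex of $\+ W'$, this does not matter, since the fork at $W_i$ already blocks the path. After this step the effective intervention is on $\+ X \cup \+ Z$, and I would apply the truncated factorization exactly as in Theorem~\ref{thm:Xancestors}, retaining a factor $P(v_i \mid \Pa(v_i)_G \setminus \{v_i\})$ for each $V_i \in \+ V \setminus (\+ X \cup \+ Z)$ together with $\prod_{\+ U} P(u_i)$.

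Next I would sum out every unobserved variable whose children in $G$ all lie in $\+ X \cup \+ Z$, since such a variable appears in no retained factor and contributes $\sum_u P(u) = 1$. That the remaining latents have both children inside $\+ V \setminus \+ Z$ uses the interception property together with the maximality of $\+ Z$ with respect to $\+ W$: a latent with one child in $\+ Z$ and one child $B \in \+ V \setminus (\+ X \cup \+ Z)$ would yield a bidirected edge that, spliced onto a $\+ W$-avoiding $B$-to-$\+ Y$ path (which exists because $B \notin \+ Z$), produces a $\+ W$-avoiding path from $\+ Z$ to $\+ Y$, a contradiction. Writing $\+ V^\prime = \+ V \setminus \+ Z$ and invoking \citet{verma93} as in the theorem, the assumption $G[\+ V^\prime] = L(G, \+ V^\prime)$ lets me replace the surviving conditionals and latent marginals by their $G^\prime$-counterparts; together with $\+ V^\prime \setminus \+ X = \+ V^\prime \setminus (\+ X \setminus \+ Z)$ this is precisely the truncated factorization of $P^\prime_{\+ x \setminus \+ z}(\+ y)$ in $G^\prime$.

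The delicate point, and the place where I expect to spend the most care, is the bookkeeping for latents whose second child lies in $\+ Z_X$ --- variables that are simultaneously intervened upon and pruned. Such a latent is retained by the child-in-$\+ V^\prime$ criterion, yet its partner in $\+ Z_X$ disappears in the projection, so I must check that it reduces correctly (either to a genuine bidirected edge of $G^\prime$ manufactured by the projection, or to noise absorbed into its surviving child) and, crucially, that discarding the interventions on $\+ Z_X$ is legitimate. This is exactly where the hypothesis that no member of $\+ W' = \+ X \setminus \+ W$ is a descendant of $\+ W$ is needed: it guarantees that the intervened-and-pruned vertices of $\+ Z_X$ sit entirely behind the intercepting set $\+ W$, so that the interception structure is preserved under the graph mutilations and the value assigned to each $\+ Z_X$ variable cannot reach $\+ Y$ except through $\+ W$, making $\doo(\+ z_X)$ inert and safely droppable.
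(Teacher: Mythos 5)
Your proposal is essentially correct in its architecture, but it takes a genuinely different route from the paper, and its final step has a real gap. The paper's proof is a two-line reduction: it uses rule 3 in the \emph{deletion} direction to remove the interventions on $\+ Z \cap \+ X$, i.e.\ $P_{\+ x}(\+ y) = P_{\+ x \setminus \+ z}(\+ y)$, justified by $(\+ Y \indep \+ Z \mid \+ X \setminus \+ Z)_{G_{\overline{\+ X \setminus \+ Z}}}$ (which holds by exactly the fork-at-$\+ W$ argument you give: any surviving path meets some $W_i \in \+ W \subseteq \+ X \setminus \+ Z$ as an interior fork), and then invokes Theorem~\ref{thm:Xancestors} as a black box with intervention set $\+ X \setminus \+ Z$ and the same removal set $\+ Z$. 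You instead \emph{insert} interventions on $\+ Z_0 = \+ Z \setminus \+ X$ and re-run the entire factorization argument of Theorem~\ref{thm:Xancestors} from scratch. Your rule-3 insertion, the summing-out of latents whose children all lie in $\+ X \cup \+ Z$, the splicing/maximality argument, and the use of the projection assumption are all sound; the cost of your route is that you must redo the latent bookkeeping that the paper inherits for free, and in particular you must yourself dispose of the interventions $\doo(\+ z_X)$, $\+ Z_X = \+ Z \cap \+ X$, which the paper eliminates before any factorization takes place.

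That disposal is where your write-up fails. First, the case you flag as delicate---a retained latent with one child in $\+ Z_X$ and one child in $\+ V \setminus (\+ X \cup \+ Z)$---cannot occur: your own splicing argument applies verbatim to any child in $\+ Z$, whether or not it lies in $\+ X$, so no such latent exists and there is nothing to ``reduce correctly.'' Second, the claim that actually needs proof---that the final expression does not depend on the values $\+ z_X$, so that $\doo(\+ z_X)$ may be dropped---is not established by your appeal to the hypothesis that no member of $\+ X \setminus \+ W$ descends from $\+ W$; ``inert and safely droppable'' is an assertion, not an argument, and it points at the wrong hypothesis (note that the paper's own proof uses that hypothesis only to conclude $\+ W \cap \+ Z = \emptyset$, which is separately assumed). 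The correct mechanism is again the maximality of $\+ Z$: if some $V_i \in \+ V \setminus (\+ X \cup \+ Z)$ had a parent $Z_j \in \+ Z$, then splicing the edge $Z_j \rightarrow V_i$ onto a $\+ W$-avoiding path from $V_i$ to $\+ Y$ (which exists since $V_i \notin \+ Z$; if $V_i \in \+ Y$ the edge alone suffices) yields a $\+ W$-avoiding path from $Z_j$ to $\+ Y$, contradicting $Z_j \in \+ Z$. Hence no retained factor $P(v_i \mid \Pa(v_i)_G \setminus \{v_i\})$ conditions on any member of $\+ Z$, the values $\+ z_X$ and $\+ z_0$ literally never appear in the expression, and what survives is the truncated factorization of $P^\prime_{\+ x \setminus \+ z}(\+ y)$ in $G^\prime$. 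Alternatively, you can sidestep the whole issue by doing what the paper does: delete $\doo(\+ z_X)$ at the outset via rule 3 using $(\+ Y \indep \+ Z \cap \+ X \mid \+ X \setminus \+ Z)_{G_{\thickbar{\+ X}}}$, and only then prune.
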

\begin{proof}
Since $\+ W$ intercepts all paths from $\+ Z$ to $\+ Y$ and no member of $\+ X \setminus \+ W$ is a descendant of $\+ W$, it follows that no member of $\+ W$ is in $\+ Z$. According to the third rule of do-calculus we have that $(\+ Y \indep \+ Z|\+ X \setminus \+ Z)_{G_{\overline{\+ X \setminus \+ Z}}}$ and $P_{\+ x}(\+ y) = P_{\+ x \setminus \+ z}(\+ y)$. The claim now follows by applying Theorem~\ref{thm:Xancestors} to $P_{\+ x \setminus \+ z}(\+ y)$.
\end{proof}

\begin{corollary} \label{cor:removal}
When the causal effect $P_{\+ x}(\+ y)$ is considered in graph $G$, a set of vertices $\+ Z = \An(\+ Y)_{G} \setminus \Co(\+ Y)_{G_{\thickbar{\+ X}}}$ can be removed from $G$ if $G[\+ V \setminus \+ Z] = L(G, \+ V \setminus \+ Z)$.
\end{corollary}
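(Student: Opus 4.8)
The plan is to reduce the statement to Theorem~\ref{thm:Xancestors} and Corollary~\ref{cor:extend} by recognizing $\+ Z = \An(\+ Y)_G \setminus \Co(\+ Y)_{G_{\thickbar{\+ X}}}$ as a set that $\+ X$ separates from $\+ Y$. First I would unpack the definition: each $z \in \+ Z$ is an ancestor of $\+ Y$, hence connected to $\+ Y$ in $G$, yet $z \notin \Co(\+ Y)_{G_{\thickbar{\+ X}}}$, so $z$ becomes disconnected from $\+ Y$ once all edges with an arrowhead into $\+ X$ are deleted. As $G_{\thickbar{\+ X}}$ differs from $G$ only by those deletions, every path from $\+ Z$ to $\+ Y$ in $G$ must use a deleted edge and therefore meet a vertex of $\+ X$; thus $\+ X$ intercepts all paths from $\+ Z$ to $\+ Y$. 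The same separation, after discarding non-ancestors of $\+ Y$ by Lemma~\ref{lem:ancestors} so that $\+ V = \An(\+ Y)_G$, shows that every child of a $\+ Z$-vertex again lies in $\+ X \cup \+ Z$; this is the fact that will let the $\+ Z$-factors detach.

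I would then split $\+ Z$ along its intersection with the intervention set. For the intervention variables $\+ Z \cap \+ X$ I would apply the third rule of do-calculus: since $\+ Z$ is disconnected from $\+ Y$ in $G_{\thickbar{\+ X}}$ we have $(\+ Y \indep \+ Z \cap \+ X \mid \+ X \setminus \+ Z)_{G_{\thickbar{\+ X}}}$, so these interventions may be deleted and $P_{\+ x}(\+ y) = P_{\+ x \setminus \+ z}(\+ y)$. For the remainder I would invoke Corollary~\ref{cor:extend} (equivalently Theorem~\ref{thm:Xancestors}) with an interceptor set $\+ W \subseteq \+ X$ and verify its hypotheses: that $\+ W$ intercepts all $\+ Z$-to-$\+ Y$ paths, that no member of $\+ X \setminus \+ W$ descends from $\+ W$, and that $\+ Z \cap \+ W = \emptyset$; the latent-projection equality $G[\+ V \setminus \+ Z] = L(G, \+ V \setminus \+ Z)$ is given.

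The main obstacle is the treatment of $\+ Z \cap \+ X$ and the matching choice of $\+ W$. The naive choice $\+ W = \+ X \setminus \+ Z$ does not work: an intervention variable in $\+ Z$ can reach $\+ Y$ along a path meeting no other intervention variable, so $\+ X \setminus \+ Z$ need not intercept it, and it may descend from a $\+ W$-vertex, breaking the descendant hypothesis. It is precisely the latent-projection equality that excludes these configurations, and I would use it to select the interceptors among the $\+ X$-vertices on the boundary of $\Co(\+ Y)_{G_{\thickbar{\+ X}}}$ that are not ancestors of any $\+ Z$-vertex. Should the reduction stay too delicate, the cleanest fallback is to mimic the proof of Theorem~\ref{thm:Xancestors} directly: intervene on all of $\+ Z$ by the third rule, expand $P_{\+ x, \+ z}(\+ y)$ by the truncated factorization, partition the latents into those feeding $\+ Z$, $\+ X$, and $\+ V \setminus \+ Z$ so that the first two groups sum to one, and note that no surviving factor references a $\+ Z$-parent and no surviving latent crosses the cut, whence the $\+ Z$-block drops and the latent-projection equality rewrites the remainder as $P'_{\+ x \setminus \+ z}(\+ y)$ in $G[\+ V \setminus \+ Z]$.
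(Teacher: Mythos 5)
Your route is the same as the paper's: its entire proof consists of observing that every member of $\+ Z = \An(\+ Y)_G \setminus \Co(\+ Y)_{G_{\thickbar{\+ X}}}$ is an ancestor of $\+ Y$ all of whose paths to $\+ Y$ meet $\+ X$, and then invoking Corollary~\ref{cor:extend}; your first two paragraphs reproduce exactly this, with your rule-3 treatment of $\+ Z \cap \+ X$ merely unfolding what the proof of Corollary~\ref{cor:extend} already does internally. You are also right that the delicate point is the existence of an admissible interceptor $\+ W$; the paper relegates this to an informal remark after the proof about choosing the sets $\+ W_{Z_i}$.

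However, the claim on which your primary route rests is false: the hypothesis $G[\+ V \setminus \+ Z] = L(G, \+ V \setminus \+ Z)$ does \emph{not} exclude configurations in which a vertex of $\+ Z \cap \+ X$ reaches $\+ Y$ along a path meeting no other $\+ X$-vertex. Take $\+ V = \{X_1, X_2, A, Y\}$, $\+ X = \{X_1, X_2\}$, $\+ Y = \{Y\}$, with edges $X_1 \rightarrow X_2$, $X_2 \rightarrow Y$, $A \rightarrow X_1$, $A \rightarrow X_2$, $A \rightarrow Y$ and no bidirected edges. Then $\+ Z = \{X_1\}$, and $G[\{A, X_2, Y\}] = L(G, \{A, X_2, Y\})$ because the only projected edge, $A \rightarrow X_2$, is already present in $G$; yet $X_1 \leftarrow A \rightarrow Y$ is a path from $\+ Z$ to $\+ Y$ whose only $\+ X$-vertex is $X_1 \in \+ Z$ itself. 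Any interceptor $\+ W$ must be disjoint from $\+ Z$, so no admissible $\+ W$ exists and Corollary~\ref{cor:extend} is simply inapplicable here, even though the conclusion of Corollary~\ref{cor:removal} is true in this graph, since $P_{x_1,x_2}(y) = \sum_a P(y|a,x_2)P(a) = P^\prime_{x_2}(y)$. Thus your interceptor-selection step would fail whenever $\+ Z \cap \+ X \neq \emptyset$ in this way (to be fair, the paper's own three-sentence proof, and its remark about the $\+ W_{Z_i}$, are open to the same objection). Your fallback is the correct repair and is where the real argument lives: after rule 3 turns $P_{\+ x}(\+ y)$ into $P_{\+ x, \+ z}(\+ y)$, the intervened members of $\+ Z$ contribute no factors to the truncated factorization, so they need no interceptor at all; what remains is to check that the marginalization over $\+ Z$ and its attached unobserved variables telescopes to one and that the surviving factors are those induced by $G[\+ V \setminus \+ Z]$, and it is precisely there that the definition of $\+ Z$ and the latent-projection hypothesis are used, as in the proof of Theorem~\ref{thm:Xancestors}. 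Had you led with that argument instead of the interceptor construction, your proof would be complete and, on this point, more careful than the paper's.
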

\begin{proof} 
The set $\An(\+ Y)_{G}$ contains $\+ Y$ in addition to the ancestors of $\+ Y$, and the set $\Co(\+ Y)_{G_{\thickbar{\+ X}}}$ contains $\+ Y$ and all vertices that are connected to $\+ Y$ via a path that does not contain edges incoming to $\+ X$. Therefore, $\+ Z$ contains such ancestors of $\+ Y$ that all paths from $\+ Z$ to $\+ Y$ contain $\+ X$. The removal of $\+ Z$ from $G$ is now licensed by Corollary~\ref{cor:extend}.
\end{proof}

Corollary~\ref{cor:removal} provides a constructive criterion for the set $\+ Z$ described in Corollary~\ref{cor:extend} when $G$ consists only of $\+ Y$ and its ancestors. If a vertex $Z_i$ is a member of $\An(\+ Y)_G \setminus \Co(\+ Y)_{G_{\thickbar{\+ X}}}$ then it must be connected to $\+ Y$ only through paths containing some $\+ W_{Z_i} \subseteq \+ X$. We can always choose the sets $\+ W_{Z_i}$ in such a way that the union $\+ W = \cup \+ W_{Z_i}$ over the members $Z_i$ of $\An(\+ Y)_G \setminus \Co(\+ Y)_{G_{\thickbar{\+ X}}}$ has no descendants in $\+ X \setminus \+ W$. The set $\+ W$ intercepts all paths from $\+ Z = \cup \{Z_i\}$ to $\+ Y$. Conversely, if $Z_i$ is a vertex such that a set $\+ W \subseteq \+ X$ intercepts all paths from $Z_i$ to $\+ Y$, then $Z_i$ cannot be connected to $\+ Y$ in $G_{\thickbar{\+ X}}$. If we assume that $G = G[\An(\+ Y)_G]$ it follows that $Z_i$ is a member of $\An(\+ Y)_G \setminus \Co(\+ Y)_{G_{\thickbar{\+ X}}}$.

We present a simple example to motivate the usefulness of Corollary~\ref{cor:removal}. We apply the ID algorithm to identify the causal effect of $X$ on $Y$ in graph $G$ of Figure~\ref{fig:cor1graph}(\subref{fig:cor1graph_start}).
\begin{figure}[h]
  \begin{subfigure}[t]{0.55\textwidth}
  \centering
\begin{tikzpicture}[scale=1.7]
\node [dot = {0}{0}{W_1}{above}] at (0.5,1.66) {};
\node [dot = {0}{0}{W_2}{below}] at (0,1) {};
\node [dot = {0}{0}{X}{below}] at (1,1) {};
\node [dot = {0}{0}{Z}{below}] at (2,1) {};
\node [dot = {0}{0}{Y}{below}] at (3,1) {};

\draw [->] (W_1) -- (W_2);
\draw [->] (W_1) -- (X);
\draw [->] (W_2) -- (X);
\draw [->] (X)   -- (Z);
\draw [->] (Z)   -- (Y);

\draw [<->,dashed] (W_1) to [bend right=45]  (W_2);
\draw [<->,dashed] (W_1) to [bend left=45]  (X);
\draw [<->,dashed] (X)   to [bend left=45]  (Y);
\end{tikzpicture}
  \caption{Graph $G$.}
  \label{fig:cor1graph_start}
  \end{subfigure}
  \begin{subfigure}[t]{0.4\textwidth}
  \centering
\begin{tikzpicture}[scale=1.7]
\node [dot = {0}{0}{X}{below}] at (1,1) {};
\node [dot = {0}{0}{Z}{below}] at (2,1) {};
\node [dot = {0}{0}{Y}{below}] at (3,1) {};

\draw [->] (X)   -- (Z);
\draw [->] (Z)   -- (Y);

\draw [<->,dashed] (X)   to [bend left=45]  (Y);
\end{tikzpicture}
  \caption{Subgraph $G[\{X,Y,Z\}]$.}
  \label{fig:cor1graph_sub}
  \end{subfigure}
  \caption{A graph for an example where Corollary~\ref{cor:removal} allows us to remove vertices $W_1$ and $W_2$ when the causal effect of $X$ on $Y$ is considered.}
  \label{fig:cor1graph}
\end{figure}
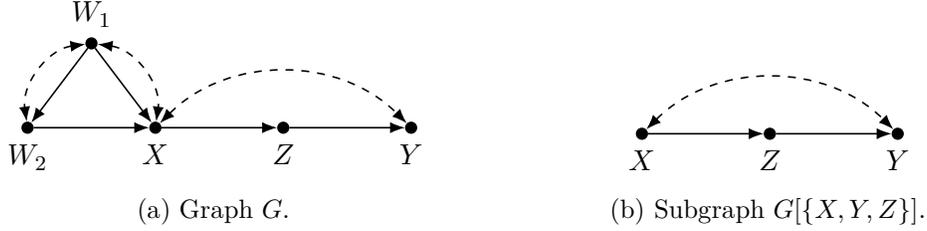
\noindent
Applying the ID algorithm results in the following expression for the causal effect
\[
  \sum_{z}P(z|w_1,w_2,x)\left(\sum_{w_1,w_2,x^\prime}P(y|w_1,w_2,x^\prime,z)P(x^\prime|w_1,w_2)P(w_2|w_1)P(w_1)\right).
\]
Applying Corollary~\ref{cor:removal} in this case would result in the removal of the vertices $W_1$ and $W_2$ from the graph, since they are ancestors of $Y$ in $G$ but not connected to $Y$ in $G_{\thickbar{X}}$ and the corresponding latent projection is the subgraph $G[\{X,Y,Z\}]$ of Figure~\ref{fig:cor1graph}(\subref{fig:cor1graph_sub}). Running the ID algorithm in this subgraph provides us the following expression
\[
  \sum_{z}P(z|x)\left(\sum_{x^\prime}P(y|x^\prime,z)P(x^\prime) \right).
\]
We may consider this expression simpler compared to the previous output by noting that $W_1$ and $W_2$ do not appear in the expression and it has fewer unique terms. The same expression can also be obtained manually by applying the front-door criterion \citep{pearl09}.

Often the question of identifiability can not be answered directly by neither the back-door nor the front-door criterion which leads us to more general methods, such as the ID algorithm. We are interested in the causal effect of $W_1$, $X_1$ and $X_2$ on $Y_1$ and $Y_2$ in the graph of Figure~\ref{fig:cor1graph2}(\subref{fig:cor1graph2_start}).

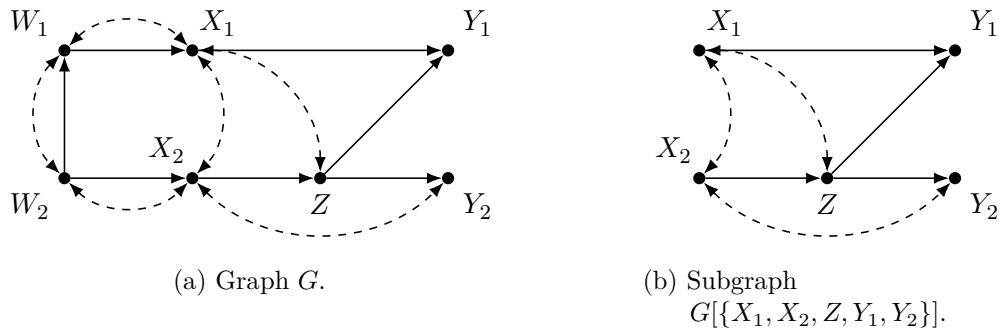
\begin{figure}[h]
  \begin{subfigure}[t]{0.55\textwidth}
  \centering
\begin{tikzpicture}[scale=1.7]
\node [dot = {0}{0}{W_1}{above left}] at (0,1) {};
\node [dot = {0}{0}{W_2}{below left}] at (0,0) {};
\node [dot = {-0.1}{0}{X_1}{above right}] at (1,1) {};
\node [dot = {0.1}{0}{X_2}{above left}] at (1,0) {};
\node [dot = {0}{0}{Z}{below}] at (2,0) {};
\node [dot = {0}{0}{Y_1}{above right}] at (3,1) {};
\node [dot = {0}{0}{Y_2}{below right}] at (3,0) {};

\draw [->] (W_1) -- (X_1);
\draw [->] (W_2) -- (W_1);
\draw [->] (W_2) -- (X_2);
\draw [->] (X_2) -- (Z);
\draw [->] (Z)   -- (Y_2);
\draw [->] (X_1) -- (Y_1);
\draw [->] (Z)   -- (Y_1);

\draw [<->,dashed] (W_1) to [bend right=45]  (W_2);
\draw [<->,dashed] (W_1) to [bend left=45]  (X_1);
\draw [<->,dashed] (W_2) to [bend right=45]  (X_2);
\draw [<->,dashed] (X_1) to [bend left=45]  (X_2);
\draw [<->,dashed] (X_1) to [bend left=45]  (Z);
\draw [<->,dashed] (X_2) to [bend right=45]  (Y_2);
\end{tikzpicture}
  \caption{Graph $G$.}
  \label{fig:cor1graph2_start}
  \end{subfigure}
  \hspace{0.8cm}
  \begin{subfigure}[t]{0.36\textwidth}
\begin{tikzpicture}[scale=1.7]
\node [dot = {-0.1}{0}{X_1}{above right}] at (1,1) {};
\node [dot = {0.1}{0}{X_2}{above left}] at (1,0) {};
\node [dot = {0}{0}{Z}{below}] at (2,0) {};
\node [dot = {0}{0}{Y_1}{above right}] at (3,1) {};
\node [dot = {0}{0}{Y_2}{below right}] at (3,0) {};

\draw [->] (X_2) -- (Z);
\draw [->] (Z)   -- (Y_2);
\draw [->] (X_1) -- (Y_1);
\draw [->] (Z)   -- (Y_1);

\draw [<->,dashed] (X_1) to [bend left=45]  (X_2);
\draw [<->,dashed] (X_1) to [bend left=45]  (Z);
\draw [<->,dashed] (X_2) to [bend right=45]  (Y_2);
\end{tikzpicture}
  \caption{Subgraph $G[\{X_1,X_2,Z,Y_1,Y_2\}]$.}
  \label{fig:cor1graph2_pruned}
  \end{subfigure}
  \caption{A graph for an example where the back-door and front-door criteria are unavailable, but Corollary~\ref{cor:removal} allows us to remove vertices $W_1$ and $W_2$ even when $W_1$ is part of the intervention.}
  \label{fig:cor1graph2}
\end{figure}
\noindent
Direct application of the ID algorithm provides us with the following expression
\[
  \sum_{z}P(y_1|w_2,w_1,x_2,x_1,z)P(z|w_2,x_2)\left(\sum_{w_2,x_2^{\prime}}P(y_2|w_2,x_2^{\prime},z)P(x_2^{\prime}|w_2)P(w_2)\right).
\]
Corollary~\ref{cor:removal} licenses the removal of $W_1$ and $W_2$ from the graph. By running ID again in the resulting subgraph $G[\{X_1,X_2,Z,Y_1,Y_2\}]$ as shown in Figure~\ref{fig:cor1graph2}(\subref{fig:cor1graph2_pruned}) we obtain a simpler expression for the causal effect
\[
  \sum_{z}P(y_1|x_1,x_2,z)P(z|x_2)\left(\sum_{x_2^{\prime}}P(y_2|x_2^{\prime},z)P(x_2^{\prime})\right).
\]

The next example illustrates the necessity of the assumption $G[\+ V \setminus \+ Z] = L(G, \+ V \setminus \+ Z)$ of Theorem~\ref{thm:Xancestors} and Corollary~\ref{cor:extend}. We are interested in the causal effect of $X_1$ and $X_2$ on $Y$ in graph $G$ of Figure~\ref{fig:necessary}(\subref{fig:necessary_start}).
\begin{figure}[h]
  \begin{subfigure}[t]{1.0\textwidth}
    \centering
\begin{tikzpicture}[scale=2.0]
\node [dot = {0}{0}{X_1}{above}] at (1,1.5) {};
\node [dot = {0}{0}{X_2}{below}] at (1,0.5) {};
\node [dot = {0}{0}{Z}{below}] at (1.75,1.25) {};
\node [dot = {0}{0}{W}{left}] at (0,1) {};
\node [dot = {0}{0}{Y}{right}] at (2.5,1) {};

\draw [->] (W) -- (X_1);
\draw [->] (W) -- (X_2);
\draw [->] (X_1) -- (Z);
\draw [->] (Z) -- (Y);
\draw [->] (X_2) -- (Y);

\draw [<->,dashed] (W) to [bend left=45]  (X_1);
\draw [<->,dashed] (X_1) to [bend left=45]  (Y);
\draw [<->,dashed] (X_2) to [bend left=45]  (Z);
\end{tikzpicture}
  \caption{Graph $G$.}
  \label{fig:necessary_start}
  \end{subfigure}
  \begin{subfigure}[t]{0.5\textwidth}
  \centering
\begin{tikzpicture}[scale=2.0]
\node [dot = {0}{0}{X_1}{above}] at (1,1.5) {};
\node [dot = {0}{0}{X_2}{below}] at (1,0.5) {};
\node [dot = {0}{0}{Z}{below}] at (1.75,1.25) {};
\node [dot = {0}{0}{Y}{right}] at (2.5,1) {};

\draw [->] (X_1) -- (Z);
\draw [->] (Z) -- (Y);
\draw [->] (X_2) -- (Y);

\draw [<->,dashed] (X_1) to [bend left=45]  (Y);
\draw [<->,dashed] (X_2) to [bend left=45]  (Z);
\end{tikzpicture}
  \caption{Subgraph $G[\{X_1,X_2,Z,Y\}]$.}
  \label{fig:necessary_sub}
  \end{subfigure}
  \begin{subfigure}[t]{0.5\textwidth}
  \centering
\begin{tikzpicture}[scale=2.0]
\node [dot = {0}{0}{X_1}{above}] at (1,1.5) {};
\node [dot = {0}{0}{X_2}{below}] at (1,0.5) {};
\node [dot = {0}{0}{Z}{below}] at (1.75,1.25) {};
\node [dot = {0}{0}{Y}{right}] at (2.5,1) {};

\draw [->] (X_1) -- (Z);
\draw [->] (Z) -- (Y);
\draw [->] (X_2) -- (Y);

\draw [<->,dashed] (X_1) to [bend left=45]  (Y);
\draw [<->,dashed] (X_2) to [bend left=45]  (Z);
\draw [<->,dashed] (X_1) to [bend right=45]  (X_2);
\end{tikzpicture}
  \caption{Latent projection $L(G,\{X_1,X_2,Z,Y\})$.}
  \label{fig:necessary_latent}
  \end{subfigure}
  \caption{An example where the assumptions of Corollary~\ref{cor:extend} are not met because the subgraph in panel (b) and the latent projection in panel (c) differ from each other.}
  \label{fig:necessary}
\end{figure}
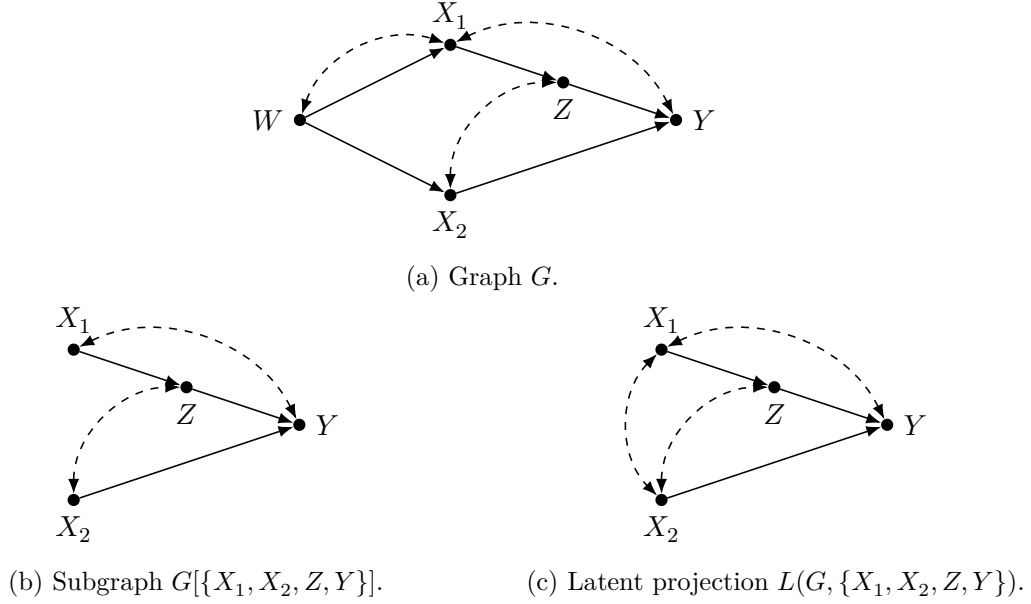
\noindent
In this graph $W$ is connected to $Y$ only through $X_1$ and $X_2$, but the corresponding latent projection does not match the subgraph with $W$ removed as seen in Figures~\ref{fig:necessary}(\subref{fig:necessary_sub}) and \ref{fig:necessary}(\subref{fig:necessary_latent}). In $G$ the causal effect is identifiable, but it is not identifiable in the latent projection $G^\prime = L(G, \{X_1,X_2,Z,Y\})$. In this latent projection a bidirected edge exists between $X_1$ and $X_2$ and a hedge is formed by the C-forests $G^\prime$ and $G[\{Y\}]$.

We may also remove sets of vertices that are connected to the rest of the graph only through a single vertex even when no intervention on the corresponding variable has taken place.

\begin{theorem} \label{thm:singleton}
Let $G$ be an SMG such that $G = G[An(\+ Y)_G]$ for a set of vertices $\+ Y$ and let $W$ be a vertex of $G$. If there exists a set $\+ Z$ such that $\+ Z \cap (\+ Y \cup \+ X) = \emptyset$ and $\+ Z$ is connected to $\+ V \setminus \+ Z$ only through $W$. Then the causal effect $P_{\+ x}(\+y)$ obtained from $P$ in $G$ is equal to $P^\prime_{\+ x}(\+ y)$ obtained from $P^\prime = P(\+ V \setminus \+ Z)$ in $G[\+ V \setminus \+ Z]$.
\end{theorem}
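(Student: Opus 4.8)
The plan is to mirror the proof of Theorem~\ref{thm:Xancestors}: express $P_{\+ x}(\+ y)$ through the truncated factorization over all observed and unobserved variables, marginalize out $\+ Z$ together with the latent variables attached to it, and recognize what remains as the corresponding truncated factorization in $G' = G[\+ V \setminus \+ Z]$. The essential difference is that the separating vertex $W$ need not belong to $\+ X$, so Theorem~\ref{thm:Xancestors} cannot be invoked directly; instead I would show that its hypothesis $G[\+ V \setminus \+ Z] = L(G, \+ V \setminus \+ Z)$ holds automatically in the present setting, and then redo the factorization argument with $W$ playing the role of the separator.

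First I would record the structural consequences of the hypotheses. Since $G = G[\An(\+ Y)_G]$, every vertex of $\+ Z$ is an ancestor of $\+ Y$; as $\+ Y \cap \+ Z = \emptyset$, any directed path from $z \in \+ Z$ to $\+ Y$ must leave $\+ Z$, and because $\+ Z$ is connected to $\+ V \setminus \+ Z$ only through $W$ this path necessarily passes through $W$. Hence $\+ Z \subseteq \An(W)_G$, and by acyclicity $W$ has no edge pointing into $\+ Z$, so every directed edge between $\+ Z$ and $W$ is oriented $\+ Z \to W$. Moreover $W$ is the only vertex of $\+ V \setminus \+ Z$ adjacent, by a directed or bidirected edge, to any vertex of $\+ Z$. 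Letting $\+ U_{\+ Z}$ denote the unobserved variables with a child in $\+ Z$, each such variable has its remaining child in $\+ Z \cup \{W\}$, so $\+ U_{\+ Z}$ appears only in the factors of vertices in $\+ Z \cup \{W\}$; in particular the parents of each $z_i \in \+ Z$ lie in $(\+ Z \setminus \{z_i\}) \cup \+ U_{\+ Z}$.

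Next I would verify $L(G, \+ V \setminus \+ Z) = G[\+ V \setminus \+ Z]$, which is where the single-vertex hypothesis does the work. A latent projection introduces an edge between two vertices of $\+ V \setminus \+ Z$ only through a path whose interior lies in $\+ Z$; such a path begins and ends with an edge joining an external vertex to $\+ Z$, and since $W$ is the unique external neighbour of $\+ Z$, both endpoints would have to equal $W$, which is impossible for an edge between distinct vertices. Thus no new edges arise and the projection coincides with the induced subgraph. By the result of \citet{verma93} used in Theorem~\ref{thm:Xancestors}, the conditional independences among $\+ V \setminus \+ Z$ then agree in $G$ and $G'$, so that $P(v_i \mid \Pa(v_i)_G \setminus \{v_i\}) = P'(v_i \mid \Pa(v_i)_{G'} \setminus \{v_i\})$ for every $V_i \in (\+ V \setminus \+ Z) \setminus \{W\}$, whose parent sets are untouched by deleting $\+ Z$, and $P(u_i) = P'(u_i)$ for the latents outside $\+ U_{\+ Z}$. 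The computation would then sum the truncated factorization of $P_{\+ x}(\+ y)$ over $\+ Z$ and $\+ U_{\+ Z}$: only the factors of vertices in $\+ Z$ and the factor of $W$ are affected. For each fixed configuration of $\+ U_{\+ Z}$, the factors of the vertices in $\+ Z$ constitute a full conditional factorization of $\+ Z$ and therefore sum to one; consequently the summation replaces the factor of $W$ by its marginal counterpart, which by the previous observations involves none of the deleted variables and equals $P'(w \mid \Pa(w)_{G'} \setminus \{w\})$. Since $\+ X \cap \+ Z = \emptyset$, the intervention $\doo(\+ x)$ removes the same factors in $G$ and $G'$ (if $W \in \+ X$ its factor is absent from both and the sum over $\+ Z$ collapses to one outright), so the marginalization commutes with the intervention, and what remains is exactly the truncated factorization of $P'_{\+ x}(\+ y)$ in $G'$.

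The main obstacle I anticipate is the bookkeeping in this marginalization step: confirming that after summing out $\+ Z$ and $\+ U_{\+ Z}$ the collapsed factor of $W$ retains none of the removed variables and matches the factor prescribed by $G'$, and that this holds uniformly whether or not $W$ belongs to $\+ X$ or to $\+ Y$. The structural identity $L(G, \+ V \setminus \+ Z) = G[\+ V \setminus \+ Z]$ is the conceptual crux that guarantees these factors line up and that no latent confounding is spuriously created by the deletion of $\+ Z$.
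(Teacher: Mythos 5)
Your proposal is correct and follows essentially the same route as the paper's own proof: apply the truncated factorization, partition the unobserved variables into those attached to $\+ Z \cup \{W\}$ and the rest, marginalize out $\+ Z$ and its latents so that the factor of $W$ collapses to $P(w \mid \Pa(w)_{G[\+ V \setminus \+ Z]} \setminus \{w\})$, and match the remaining factors with those of $G[\+ V \setminus \+ Z]$ via the identity $G[\+ V \setminus \+ Z] = L(G, \+ V \setminus \+ Z)$. The only differences are that you explicitly prove that identity and the edge-orientation facts (e.g.\ $\+ Z \subseteq \An(W)_G$) and treat the case $W \in \+ X$, all of which the paper asserts implicitly rather than argues.
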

\begin{proof} Let $G^\prime = G[\+ V \setminus \+ Z]$ and let $\+ U_{\+ Z}$ and $\+ U_{\+ V \setminus \+ Z}$ be sets of unobserved variables such that for all $U \in \+ U_{\+ Z}$ it holds that $\Ch(U)_G \in \+ Z \cup \{W\}$ and for all $U \in \+ U_{\+ V \setminus \+ Z}$ it holds that $\Ch(U)_G \in (\+V \setminus \+ Z) \cup \{W\}$. Sets $\+ U_{\+ Z}$ and $\+ U_{\+ V \setminus \+ Z}$ partition $\+ U$ because $\+ Z$ is connected to $\+ V \setminus \+ Z$ only through $W$. Applying the truncated factorization formula yields
\[
  P_{\+ x}(\+ y) = \sum_{\+ U} \sum_{\+ V \setminus (\+ Y \cup \+ X)} P(w|\Pa(w)_G \setminus \{w\}) \prod_{\+ V \setminus (\+ X \cup \{W\})} P(v_i|\Pa(v_i)_G \setminus \{v_i\}) \prod_{\+ U} P(u_i).
\]
Since variables in $\+ U_{\+ Z}$ and $\+ Z$ can be connected to the other vertices of $G$ only through $W$ we can complete the marginalization over $\+ Z$ and $\+ U_{\+ Z}$
\[
  P_{\+ x}(\+ y) =\!\! \sum_{\+ U_{\+ V \setminus \+ Z}} \sum_{\+ V \setminus (\+ Y \cup \+ X \cup \+ Z)} \!P(w|\Pa(w)_G \setminus (\+ z \cup \{w\})) \prod_{\+ V \setminus (\+ X \cup \+ Z \cup \{W\})} \! P(v_i|\Pa(v_i)_G \setminus \{v_i\}) \! \prod_{\+ U_{\+ V \setminus \+ Z}} P(u_i).
\]
Because we have assumed that $\+ Z$ is disconnected from $\+ V \setminus \+ Z$ in $G_{\thickbar{W}}$ we have that $G[\+ V \setminus \+ Z] = L(G, \+ V \setminus \+ Z)$. Therefore, just as in the proof of Theorem~\ref{thm:Xancestors}, we have that $P(v_i|\Pa(v_i)_G \setminus \{v_i\}) = P^\prime(v_i|\Pa(v_i)_{G^\prime} \setminus \{v_i\})$ for all $V_i \in \+ V \setminus (\+ X \cup \+ Z \cup \{ W \})$ and $P(u_i) = P^\prime(u_i)$ for all $U_i \in \+U_{\+ V \setminus \+ Z}$. Additionally, we have $P(w|\Pa(w)_G \setminus (\+ z \cup \{w\})) = P(w|Pa(w)_{G^\prime} \setminus \{w\})$. Finally we obtain
\begin{align*}
  P_{\+ x}(\+ y) &= \sum_{\+ U_{\+ V \setminus \+ Z}} \sum_{\+ V \setminus (\+ Y \cup \+ X \cup \+ Z)} P(w|\Pa(w)_{G^\prime} \setminus \{w\}) \prod_{\+ V \setminus (\+ X \cup \+ Z \cup \{W\})} P(v_i|\Pa(v_i)_{G^\prime} \setminus \{v_i\}) \prod_{\+ U_{\+ V \setminus \+ Z}} P(u_i) \\
                 &= \sum_{\+ U_{\+ V \setminus \+ Z}} \sum_{\+ V \setminus (\+ Y \cup \+ X \cup \+ Z)} \prod_{\+ V \setminus (\+ X \cup \+ Z)} P(v_i|\Pa(v_i)_{G^\prime} \setminus \{v_i\}) \prod_{\+ U_{\+ V \setminus \+ Z}} P(u_i) \\
                 &= P_{\+ x}^\prime(\+ y).
\end{align*}
\end{proof}

\begin{corollary} \label{cor:removalsingle} Let $W$ be a vertex of an SMG $G$ and let $\+ R = \An(W)_{G_{\thickbar{\+ X}}} \setminus \De(\+ X)_G $. When the causal effect $P_{\+ x}(\+ y)$ is considered in graph $G$, the set of vertices $\+ T = \+ R \setminus \Co(\+ V \setminus \+ R)_{G_{\thickbar{W}}}$ can be removed from the graph if $G = G[\An(\+ Y)_G]$.
\end{corollary}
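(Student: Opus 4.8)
The plan is to exhibit $\+ T$ as an instance of the set $\+ Z$ appearing in Theorem~\ref{thm:singleton} and then invoke that theorem directly. Concretely, I would set $\+ Z = \+ T$ and verify the two hypotheses of Theorem~\ref{thm:singleton}: that $\+ T \cap (\+ Y \cup \+ X) = \emptyset$, and that $\+ T$ is connected to $\+ V \setminus \+ T$ only through the single vertex $W$. Once both are established, Theorem~\ref{thm:singleton} licenses the removal of $\+ T$, which is precisely the claim. For convenience I would write $\+ S = \Co(\+ V \setminus \+ R)_{G_{\thickbar{W}}}$, so that $\+ T = \+ R \setminus \+ S$ and $\+ V \setminus \+ T = \+ S$.

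First I would establish the connectivity condition, which I expect to be the cleanest step. By the definition of $\Co(\cdot)$, the set $\+ S$ is exactly the union of those connected components of the undirected skeleton of $G_{\thickbar{W}}$ that meet $\+ V \setminus \+ R$, while every vertex of $\+ T = \+ R \setminus \+ S$ lies in a component of $G_{\thickbar{W}}$ disjoint from $\+ V \setminus \+ R$. Hence no edge of $G_{\thickbar{W}}$ joins $\+ T$ to $\+ S$. Since $G_{\thickbar{W}}$ is obtained from $G$ only by deleting the edges incoming to $W$ (and, in the SMG reading, the bidirected edges incident to $W$), any edge of $G$ between $\+ T$ and $\+ S$ must be one of these deleted edges and is therefore incident to $W$. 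Consequently every path in $G$ from $\+ T$ to $\+ V \setminus \+ T = \+ S$ passes through $W$, which is the required condition. Here I would also check that $W \in \+ S$, using that $W$ retains an outgoing edge to a child lying in $\+ V \setminus \+ R$ (a child of $W$ cannot be an ancestor of $W$, so it is not in $\+ R$); this guarantees $W \notin \+ T$ so that ``through $W$'' is meaningful.

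Next I would verify the disjointness hypothesis. The containment $\+ T \subseteq \+ R$ together with $\+ R = \An(W)_{G_{\thickbar{\+ X}}} \setminus \De(\+ X)_G$ and $\+ X \subseteq \De(\+ X)_G$ gives $\+ T \cap \+ X = \emptyset$ immediately. The delicate part, and the step I expect to be the main obstacle, is showing $\+ T \cap \+ Y = \emptyset$. For this I would lean on the standing assumption $G = G[\An(\+ Y)_G]$: every vertex, and in particular every member of $\+ R$, is an ancestor of $\+ Y$ and so reaches some outcome along a directed path. The task is to argue that whenever a vertex of $\+ Y$ lies in $\+ R$, its connection to $\+ V \setminus \+ R$ survives in $G_{\thickbar{W}}$, placing it in $\+ S$ rather than in $\+ T$. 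Pinning down exactly which structural feature forces this, and thereby ruling out an outcome being stranded behind $W$, is where the real care is required, since it is this point on which the applicability of Theorem~\ref{thm:singleton} ultimately rests.

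Finally, with both hypotheses in hand, I would apply Theorem~\ref{thm:singleton} with $\+ Z = \+ T$ to conclude that $P_{\+ x}(\+ y)$ obtained from $P$ in $G$ equals $P'_{\+ x}(\+ y)$ obtained from $P' = P(\+ V \setminus \+ T)$ in $G[\+ V \setminus \+ T]$, which is exactly the assertion that $\+ T$ may be removed.
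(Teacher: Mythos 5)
You follow exactly the paper's route: take $\+ Z = \+ T$ in Theorem~\ref{thm:singleton} and check its two hypotheses. Your verification of the connectivity hypothesis is correct, and is in fact more explicit than the paper's own proof, which treats that part as holding essentially by construction: writing $\+ S = \Co(\+ V \setminus \+ R)_{G_{\thickbar{W}}}$, you correctly observe that $\+ V \setminus \+ T = \+ S$, that $\+ S$ is a union of connected components of $G_{\thickbar{W}}$ meeting $\+ V \setminus \+ R$, and hence that any edge of $G$ joining $\+ T$ to $\+ S$ must be an edge into $W$, so every path from $\+ T$ to $\+ V \setminus \+ T$ passes through $W$. Likewise $\+ T \cap \+ X = \emptyset$ via $\+ X \subseteq \De(\+ X)_G$ matches the paper.

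The step you could not complete, $\+ T \cap \+ Y = \emptyset$, is a genuine gap in your proposal --- but it is also precisely the step the paper does not prove. The paper's proof disposes of it in one sentence, asserting that after excluding $\De(\+ X)_G$ and assuming $G = G[\An(\+ Y)_G]$ one has $\+ T \cap (\+ X \cup \+ Y) = \emptyset$, with no supporting argument for the $\+ Y$ part (the assumption $G = G[\An(\+ Y)_G]$ is in any case needed just to invoke Theorem~\ref{thm:singleton}). Your worry that an outcome could be \emph{stranded behind} $W$ is well founded, because that assumption does not exclude it. Consider $\+ V = \{Y_1, W, X, Y_2\}$ with edges $Y_1 \rightarrow W$, $W \rightarrow Y_2$, $X \rightarrow Y_2$ and the effect $P_{x}(y_1,y_2)$, so $\+ Y = \{Y_1, Y_2\}$ and every vertex is an ancestor of $\+ Y$. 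Then $\+ R = \An(W)_{G_{\thickbar{X}}} \setminus \De(X)_G = \{W, Y_1\}$ and $\Co(\+ V \setminus \+ R)_{G_{\thickbar{W}}} = \{W, X, Y_2\}$, so $\+ T = \{Y_1\} \subseteq \+ Y$; removing $Y_1$ is plainly unsound, since $P_x(y_1,y_2)$ depends on $y_1$ while $G[\+ V \setminus \+ T]$ no longer contains it. So the structural argument you were searching for does not exist at this level of generality: disjointness from $\+ Y$ must be imposed as an explicit side condition (checking $\+ T \cap \+ Y = \emptyset$ before invoking Theorem~\ref{thm:singleton}) rather than derived from $G = G[\An(\+ Y)_G]$. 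In short, your proposal is incomplete at exactly the point where the paper's own proof is, and your identification of that point as the crux is accurate.
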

\begin{proof}
No descendant of $\+ X$ can be removed via theorem~\ref{thm:singleton} since they are connected to $\+ X$ and the set to be removed cannot itself contain $\+ X$. By removing descendants of $\+ X$ and $\+ X$ itself, and assuming that $G = G[\An(\+ Y)_G]$, we have that $\+ T \cap (\+ X \cup \+ Y) = \emptyset$. Thus it remains to remove those vertices from $\+ R$ that are connected to $\+ V \setminus \+ R$ through a path that does not contain $W$. Removal of the resulting set $\+ T$ from the graph is now licensed by theorem~\ref{thm:singleton}.
\end{proof}

In the following example we consider the causal effect of $X$ on $Y$ in graph $G$ of Figure~\ref{fig:cor2graph}(\subref{fig:cor2graph_start}) and show how Corollary~\ref{cor:removalsingle} can be applied.
\begin{figure}[h]
  \begin{subfigure}[t]{0.5\textwidth}
  \centering
\begin{tikzpicture}[scale=2.2]
\node [dot = {0}{0}{W_1}{below left}] at (1.5,0.6) {};
\node [dot = {0}{0}{W_2}{below right}] at (2.5,0.6) {};
\node [dot = {0}{0}{X}{above}] at (1,0) {};
\node [dot = {0}{0}{Z}{below}] at (2,0) {};
\node [dot = {0}{0}{Y}{above}] at (3,0) {};

\draw [->] (W_1) -- (W_2);
\draw [->] (W_1) -- (Z);
\draw [->] (W_2) -- (Z);
\draw [->] (X)   -- (Z);
\draw [->] (Z)   -- (Y);

\draw [<->,dashed] (W_1) to [bend left=45]  (W_2);
\draw [<->,dashed] (W_1) to [bend right=30]  (Z);
\draw [<->,dashed] (X)   to [bend right=45]  (Y);
\end{tikzpicture}
  \caption{Graph $G$.}
  \label{fig:cor2graph_start}
  \end{subfigure}
  \begin{subfigure}[t]{0.4\textwidth}
  \centering
\begin{tikzpicture}[scale=2.2]
\node [dot = {0}{0}{X}{above}] at (1,0) {};
\node [dot = {0}{0}{Z}{below}] at (2,0) {};
\node [dot = {0}{0}{Y}{above}] at (3,0) {};

\draw [->] (X) -- (Z);
\draw [->] (Z) -- (Y);

\draw [<->,dashed] (X) to [bend right=45]  (Y);
\end{tikzpicture}
  \caption{Subgraph $G[\{X,Y,Z\}]$.}
  \label{fig:cor2graph_sub}
  \end{subfigure}
  \caption{First example of Corollary~\ref{cor:removalsingle}.}
  \label{fig:cor2graph}
\end{figure}
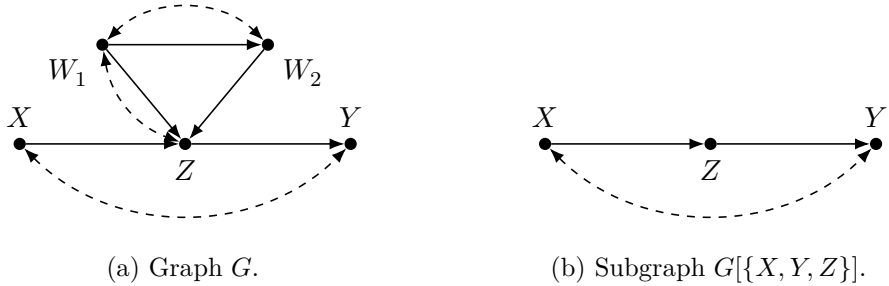
\noindent
The ID algorithm succeeds in identifying the causal effect and returns following expression for it
\[
  \sum_{w_1,w_2,z}P(z|x,w_1,w_2)P(w_2|x,w_1)P(w_1|x)\left(\sum_{x^\prime}P(y|x^\prime,w_1,w_2,z)P(x^\prime)\right).
\]
We apply Corollary~\ref{cor:removalsingle} which allows us to remove $W_1$ and $W_2$ from the graph, since they are connected to other vertices of the graph through a single vertex $Z$. Applying the ID algorithm in the resulting subgraph $G[\{X,Y,Z\}]$ of Figure~\ref{fig:cor2graph}(\subref{fig:cor2graph_sub}) provides us the following expression
\[
  \sum_{z}P(z|x)\left(\sum_{x^\prime}P(y|x^\prime,z)P(x^\prime) \right).
\]
The same expression can be obtained manually by applying the front-door criterion.

We provide another example on Corollary~\ref{cor:removalsingle} with a slightly more complicated graph. We are interested in the causal effect of $X$ on $Y$ in graph $G$ of Figure~\ref{fig:cor2graph2}(\subref{fig:cor2graph2_start})
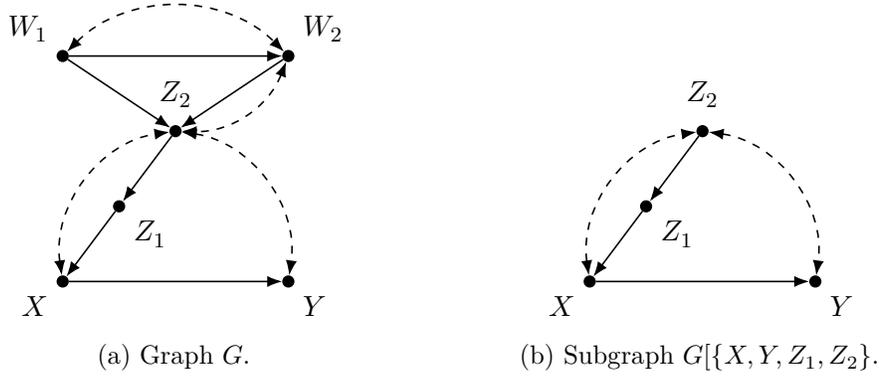
\begin{figure}[h]
  \begin{subfigure}[t]{0.5\textwidth}
  \centering
\begin{tikzpicture}
\node [dot = {0}{0}{W_1}{above left}] at (0,3) {};
\node [dot = {0}{0}{W_2}{above right}] at (3,3) {};
\node [dot = {0}{0}{X}{below left}] at (0,0) {};
\node [dot = {0}{0}{Z_1}{below right}] at (0.75,1) {};
\node [dot = {0}{0.1}{Z_2}{above}] at (1.5,2) {};
\node [dot = {0}{0}{Y}{below right}] at (3,0) {};

\draw [->] (W_1) -- (W_2);
\draw [->] (W_1) -- (Z_2);
\draw [->] (W_2) -- (Z_2);
\draw [->] (X)   -- (Y);
\draw [->] (Z_2) -- (Z_1);
\draw [->] (Z_1) -- (X);

\draw [<->,dashed] (W_1) to [bend left=45]  (W_2);
\draw [<->,dashed] (W_2) to [bend left=35]  (Z_2);
\draw [<->,dashed] (X)   to [bend left=45]  (Z_2);
\draw [<->,dashed] (Z_2) to [bend left=45]  (Y);
\end{tikzpicture}
  \caption{Graph $G$.}
  \label{fig:cor2graph2_start}
  \end{subfigure}
  \begin{subfigure}[t]{0.4\textwidth}
  \centering
\begin{tikzpicture}
\node [dot = {0}{0}{X}{below left}] at (0,0) {};
\node [dot = {0}{0}{Z_1}{below right}] at (0.75,1) {};
\node [dot = {0}{0.1}{Z_2}{above}] at (1.5,2) {};
\node [dot = {0}{0}{Y}{below right}] at (3,0) {};

\draw [->] (X)   -- (Y);
\draw [->] (Z_2) -- (Z_1);
\draw [->] (Z_1) -- (X);

\draw [<->,dashed] (X)   to [bend left=45]  (Z_2);
\draw [<->,dashed] (Z_2) to [bend left=45]  (Y);
\end{tikzpicture}
  \caption{Subgraph $G[\{X,Y,Z_1,Z_2\}$.}
  \label{fig:cor2graph2_sub}
  \end{subfigure}
  \caption{Second example of Corollary~\ref{cor:removalsingle}.}
  \label{fig:cor2graph2}
\end{figure}
\noindent
We obtain a formula for the causal effect using the ID algorithm
\[
  \frac{\sum_{w_2,z_2}P(y|w_1,w_2,z_2,z_1,x)P(x|w_1,w_2,z_2,z_1)P(z_2|w_1,w_2)P(w_2|w_1)}{\left(\sum_{w_2,z_2,y}P(y|w_1,w_2,z_2,z_1,x)P(x|w_1,w_2,z_2,z_1)P(z_2|w_1,w_2)P(w_2|w_1)\right)}.
\]
Vertices $W_1$ and $W_2$ are connected to other vertices only through $Z_2$ which allows us to remove them from the graph. We obtain a simpler formula for the causal effect from the subgraph $G[\{X,Y,Z_1,Z_2\}]$ of Figure~\ref{fig:cor2graph2}(\subref{fig:cor2graph2_sub})
\[
  \frac{\sum_{z_2}P(y|z_2,z_1,x)P(x|z_2,z_1)P(z_2)}{\sum_{z_2,y}P(y|z_2,z_1,x)P(x|z_2,z_1)P(z_2)}.
\]

The previous results have allowed us to completely remove specific vertices from the graph. Next we will consider cases where a vertex is present in the graph, but it is not necessary to observe it. This means that instead of the original graph we may consider identifiability in the corresponding latent projection, as characterized by the following lemma.

\begin{lemma} \label{lem:latent} Let $G = \langle \+ V, \+ E \rangle$ be an SMG and let $\+ X, \+ Y$ and $\+ Z$ be disjoint sets of variables. Let $P(\+ V)$ be the joint distribution of $\+ V$. Then the causal effect of $\+ X$ on $\+ Y$ is identifiable from $P^\prime$ in the latent projection $L(G, \+ V \setminus \+ Z)$ where $P^\prime = P(\+ V \setminus \+ Z)$, if and only if it is identifiable from $P^\prime$ in $G$.
\end{lemma}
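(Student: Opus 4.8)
The plan is to recast both sides of the equivalence in terms of the family of distribution pairs that causal models can realize, and then to show that these two families coincide. Write $G^\prime = L(G, \+ V \setminus \+ Z)$, and note that since $\+ X, \+ Y$ and $\+ Z$ are disjoint we have $\+ X \cup \+ Y \subseteq \+ V \setminus \+ Z$. For an SMG $H$ whose retained vertices are $\+ V \setminus \+ Z$, let $\mathcal{R}(H)$ be the set of pairs $(P_M(\+ v \setminus \+ z),\, (P_M)_{\+ x}(\+ y))$ as $M$ ranges over the causal models inducing $H$. Identifiability from $P^\prime$ in $H$ is exactly the assertion that the margin $P_M(\+ v \setminus \+ z)$ determines the effect $(P_M)_{\+ x}(\+ y)$ throughout this family, i.e.\ that no two elements of $\mathcal{R}(H)$ share their first coordinate while differing in the second. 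Hence the lemma reduces to proving $\mathcal{R}(G) = \mathcal{R}(G^\prime)$: once the two realizable families agree, single-valuedness of the margin-to-effect correspondence holds for one precisely when it holds for the other, and both implications of the ``if and only if'' follow at once.

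The first inclusion $\mathcal{R}(G) \subseteq \mathcal{R}(G^\prime)$ is the easy, marginalizing direction. Given a model $M$ inducing $G$, I would reclassify the vertices $\+ Z$ as latent; the latent projection construction then turns the resulting graph into precisely $G^\prime$, leaving the mechanisms of $\+ V \setminus \+ Z$ intact while folding the influence of $\+ Z$ into shortcut directed edges and bidirected edges. Because the intervention $\doo(\+ x)$ and the queried margin both avoid $\+ Z$, neither $P_M(\+ v \setminus \+ z)$ nor $(P_M)_{\+ x}(\+ y)$ changes. The one supporting fact to check is that latent projection commutes with the edge surgery of intervention, namely $L(G_{\thickbar{\+ X}}, \+ V \setminus \+ Z) = G^\prime_{\thickbar{\+ X}}$ when $\+ X \cap \+ Z = \emptyset$, which holds because deleting edges into $\+ X$ and rerouting the paths passing through $\+ Z$ act on disjoint parts of the graph. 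This inclusion alone already gives the direction ``identifiable in $G^\prime$ implies identifiable in $G$,'' since shrinking the realizable family cannot introduce a violation of single-valuedness.

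The reverse inclusion $\mathcal{R}(G^\prime) \subseteq \mathcal{R}(G)$ is the substantive part, and I expect it to be the main obstacle. Given a model $M^\prime$ inducing $G^\prime$, I would reconstruct a model $M$ inducing $G$ with the same realizable pair by reintroducing the vertices $\+ Z$ and routing each edge of $G^\prime$ back through the structure of $G$ that produced it under projection: a directed edge $A \rightarrow B$ of $G^\prime$ coming from a path through $\+ Z$ is realized by copying values forward along that path, and each bidirected edge of $G^\prime$ is realized by the corresponding latent common cause, passed through $\+ Z$ whenever the confounding path of $G$ runs through $\+ Z$. Since the intervention and the query again never touch $\+ Z$, the reconstructed $M$ reproduces both $P_{M^\prime}(\+ v \setminus \+ z)$ and $(P_{M^\prime})_{\+ x}(\+ y)$.

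The delicate point common to both inclusions is ensuring that the constructed model induces its target graph \emph{exactly} rather than a strict subgraph, since a composition of genuinely dependent mechanisms can in principle lose its dependence through cancellation. I would dispatch this either by perturbing the added mechanisms with independent noise so that every edge of the target graph is active, taking the perturbation arbitrarily small, or---more cleanly---by invoking completeness of do-calculus to replace both identifiability statements with their purely graphical hedge characterizations, and then observing that, by the commutation of latent projection with edge surgery together with the cited result of \citet{verma93} that $G$ and $G^\prime$ share all conditional independences among $\+ V \setminus \+ Z$, a hedge for $P_{\+ x}(\+ y)$ is present on one side exactly when it is present on the other. Either route closes the gap, establishing $\mathcal{R}(G) = \mathcal{R}(G^\prime)$ and hence the lemma.
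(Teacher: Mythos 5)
Your central claim---that $\mathcal{R}(G) = \mathcal{R}(G^\prime)$---is false under the paper's definitions, and the inclusion that fails is precisely the one you call ``easy.'' Reclassifying $\+ Z$ as latent in a model $M$ inducing $G$ does \emph{not} produce a model inducing $G^\prime$: by the paper's definition of a semi-Markovian model, every model inducing $G^\prime$ must have mutually independent exogenous latents with at most two children each, whereas your reclassified latents (the $\+ Z$-variables) have parents and arbitrarily many children, and this difference is not a formality that the projection absorbs. Concretely, let $\+ V = \{Z, B, C, D\}$ with edges $Z \rightarrow B$, $Z \rightarrow C$, $Z \rightarrow D$, let $\+ Z = \{Z\}$, $\+ X = \{B\}$, $\+ Y = \{C\}$. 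Then $G^\prime = L(G, \{B,C,D\})$ consists of the three bidirected edges $B \leftrightarrow C$, $B \leftrightarrow D$, $C \leftrightarrow D$ and no directed edges. The model in which $Z$ is a fair coin and $B = C = D = Z$ induces $G$, and its margin makes $B, C, D$ perfectly correlated fair coins. No model inducing $G^\prime$ realizes this margin: writing $B = f_B(U_{BC}, U_{BD}, E_B)$, $C = f_C(U_{BC}, U_{CD}, E_C)$, $D = f_D(U_{BD}, U_{CD}, E_D)$ with all latents mutually independent, the event $B = C$ a.s.\ forces, for almost every value of $U_{BC}$, two \emph{independent} random variables to coincide a.s., so both are a.s.\ constant and $B$ is a.s.\ a function of $U_{BC}$ alone; symmetrically, $B = D$ a.s.\ makes $B$ a.s.\ a function of $U_{BD}$ alone; independence of $U_{BC}$ and $U_{BD}$ then forces $B$ to be a.s.\ constant, contradicting fairness. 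Hence $\mathcal{R}(G) \not\subseteq \mathcal{R}(G^\prime)$. Only your ``substantive'' inclusion $\mathcal{R}(G^\prime) \subseteq \mathcal{R}(G)$ is true (and your routing construction for it is essentially workable), but by itself it yields only one half of the lemma: identifiability from $P^\prime$ in $G$ implies identifiability from $P^\prime$ in $G^\prime$.

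The other half cannot be rescued by your fallback, because that route is circular and its key assertion is false. The hedge/do-calculus completeness theorem characterizes identifiability of $P_{\+ x}(\+ y)$ from the \emph{full} observational distribution over a graph's observed vertices; it says nothing directly about identifiability from the marginal $P^\prime$ in $G$, which is exactly the left-hand side of the lemma. Indeed hedges need not correspond across the two graphs: for $G$ given by $Z \rightarrow X \rightarrow Y$, $Z \rightarrow Y$ with $\+ Z = \{Z\}$, the graph $G$ contains no hedge for $P_{x}(y)$ (it is a DAG, and the effect is identifiable from $P(\+ V)$ by the back-door formula), yet $G^\prime = L(G, \{X,Y\})$ is the bow-arc graph, which does contain a hedge. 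There is no conflict with the lemma, because hedges in $G$ speak to identifiability from $P(\+ V)$, not from $P^\prime$; producing a graphical characterization of identifiability from a marginal is precisely what the lemma (equivalently, the projection theory of Tian cited in the paper) provides, so invoking such a characterization assumes what is to be proved. The paper's own proof supplies the missing ingredient structurally: $G$ and $L(G, \+ V \setminus \+ Z)$ have the same topological relations and the same maximal C-components over $\+ V \setminus \+ Z$, so an identification procedure that consults only this shared structure (Tian's, or equivalently ID) is sound and complete for both problems simultaneously, with the same output expression. To repair your argument you would need to replace the false inclusion by this kind of soundness transfer---showing that a formula derived in $G^\prime$ remains valid for every model inducing $G$ viewed through its margin, using the conditional-independence preservation and the commutation of projection with edge surgery that you already established. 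Separately, your small-perturbation device is unsound for this purpose: perturbing mechanisms changes the margin, and non-identifiability arguments require two models whose margins agree \emph{exactly}.
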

\begin{proof} \citet{tian02phd} showed that the latent projection has the same topological relations over the observables and that it has the same set of maximal C-components. Thus if $P_{\+ x}(\+ y)$ is identifiable from $P^\prime$ in $G$ it is also identifiable from $P^\prime$ in $L(G, \+ V \setminus \+ Z)$ with the same expression and vice versa.
\end{proof}

In situations where $\+ X$ is a singleton we can exploit the following sufficient condition for identifiability by \citet{tian02}.
\begin{theorem} \label{thm:sufficent} The causal effect $P_{x}(\+ y)$ is identifiable if there is no bidirected path between $X$ and any of its children in $G[\An(\+ Y)_G]$.
\end{theorem}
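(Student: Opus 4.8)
The plan is to prove the contrapositive via the completeness of the ID algorithm: a causal effect is non-identifiable exactly when a hedge exists for it. So I would assume that $P_{\+ x}(\+ y)$ is not identifiable in $G$ and derive that $G[\An(\+ Y)_G]$ must then contain a bidirected path between $X$ and one of its children. By completeness of Algorithm~\ref{alg:identify}, non-identifiability furnishes a hedge: $\+ R$-rooted C-forests $F = \langle \+ V_F, \+ E_F \rangle$ and $F^\prime = \langle \+ V_{F^\prime}, \+ E_{F^\prime} \rangle$ with $F^\prime \subseteq F$, $\+ V_F \cap \+ X \neq \emptyset$, $\+ V_{F^\prime} \cap \+ X = \emptyset$, and $\+ R \subseteq \An(\+ Y)_{G_{\thickbar{\+ X}}}$. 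Here $\+ X = \{X\}$ is a singleton, so $\+ V_F \cap \+ X \neq \emptyset$ forces $X \in \+ V_F$, while $X \notin \+ V_{F^\prime}$. Moreover, since $\+ R \subseteq \An(\+ Y)_{G_{\thickbar{\+ X}}} \subseteq \An(\+ Y)_G$ and every vertex of the C-forest $F$ is an $F$-ancestor, hence a $G$-ancestor, of a root in $\+ R$, we obtain $\+ V_F \subseteq \An(\+ Y)_G$. As $G[\An(\+ Y)_G]$ retains every directed and bidirected edge of $G$ among these vertices, $F$ is a subgraph of $G[\An(\+ Y)_G]$.

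The crucial step is to extract the forbidden configuration from $F$. The root set of $F$ is $\+ R$, and because $F^\prime$ is also $\+ R$-rooted we have $\+ R \subseteq \+ V_{F^\prime}$; combined with $X \notin \+ V_{F^\prime}$ this gives $X \notin \+ R$. Thus $X$ is not a root of $F$ and so has a descendant in $F$; since a C-forest has at most one child per observed vertex, $X$ has exactly one child $C$ in $F$. The edge $X \rightarrow C$ therefore lies in $G[\An(\+ Y)_G]$, i.e.\ $C$ is a child of $X$ there. Finally, $F$ is a C-component, so $X$ and $C$ are joined by a bidirected path inside $F$, and that path lies entirely within $G[\An(\+ Y)_G]$. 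This is precisely a bidirected path between $X$ and one of its children, contradicting the hypothesis and establishing the claim. The degenerate case $X \notin \An(\+ Y)_G$ is handled automatically, since then no hedge can satisfy $X \in \+ V_F \subseteq \An(\+ Y)_G$, and by Lemma~\ref{lem:ancestors} the effect reduces to $P(\+ y)$.

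I expect the main obstacle to be the bookkeeping in this crucial step: carefully using the equivalence between \emph{not being a root} of the C-forest and \emph{having a unique child}, then combining this with the C-component property to produce simultaneously a directed child-edge $X \rightarrow C$ and a \emph{disjoint} bidirected $X$--$C$ path, while verifying that all of these objects survive inside the induced subgraph $G[\An(\+ Y)_G]$. The singleton assumption is what makes the argument clean, as it guarantees $X$ itself (rather than merely some member of $\+ X$) sits in $\+ V_F$. An alternative, more self-contained route would avoid hedges entirely and instead invoke the C-component factorization of \citet{tian02phd} to construct the identifying functional directly, but that would require reassembling the explicit expression for $P_{\+ x}(\+ y)$ and is considerably more laborious than the hedge-based contradiction.
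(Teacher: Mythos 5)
Your proof is correct, but there is nothing in the paper to compare it against: Theorem~\ref{thm:sufficent} is stated without proof, imported from \citet{tian02}, and the original argument there is constructive. Tian and Pearl first show that, for singleton $X$, $P_x(\+ v \setminus \{x\})$ is identifiable precisely when no bidirected path joins $X$ to any of its children, by exhibiting the identifying functional through the C-component factorization $P(\+ v) = \prod_{\langle \+ C, \+ E \rangle \in C(G)} P_{\+ v \setminus \+ c}(\+ c)$, and then obtain the theorem by restricting to $G[\An(\+ Y)_G]$ and marginalizing (in effect Lemma~\ref{lem:ancestors}). You instead prove the contrapositive from the hedge characterization of \citet{shpitser06}: non-identifiability yields C-forests $F^\prime \subseteq F$; singleton $\+ X$ forces $X \in \+ V_F$; the facts $\+ R \subseteq \+ V_{F^\prime}$ and $X \notin \+ V_{F^\prime}$ force $X \notin \+ R$, so $X$ has exactly one child $C$ in $F$; every vertex of $F$ is an $F$-ancestor of $\+ R \subseteq \An(\+ Y)_{G_{\thickbar{X}}}$, so $\+ V_F \subseteq \An(\+ Y)_G$, and hence both the edge $X \rightarrow C$ and the bidirected $X$--$C$ path guaranteed by the C-component property of $F$ survive in the induced subgraph $G[\An(\+ Y)_G]$, contradicting the hypothesis. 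All of these steps are sound. Two remarks. First, what completeness literally supplies is a hedge for some sub-effect $P_{\+ x^\prime}(\+ y^\prime)$ with $\+ X^\prime \subseteq \{X\}$ and $\+ Y^\prime \subseteq \+ Y$; since a hedge requires $\+ V_F \cap \+ X^\prime \neq \emptyset$, the singleton assumption forces $\+ X^\prime = \{X\}$, and $\An(\+ Y^\prime)_{G_{\thickbar{X}}} \subseteq \An(\+ Y)_{G_{\thickbar{X}}}$ makes this a hedge for $P_x(\+ y)$ itself under the paper's definition --- worth stating explicitly in a final write-up, though it is not a gap in the singleton case. Second, the trade-off: your route is short and stays entirely within machinery the paper already has on hand, but it leans on the completeness of the ID algorithm, a result that is both deeper than and historically later than the theorem being proved, and it is non-constructive --- it certifies identifiability without producing an expression, whereas the factorization argument of Tian and Pearl yields the formula that algorithms such as ID and PID actually compute.
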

We can regard any variable as latent when $\+ X$ is a singleton if the respective latent projection does not induce such a bidirected path. 
\begin{corollary} \label{cor:latent}
Let $G$ be an SMG and let $\+ Y$ be a set of vertices. Let $X, W \in \+ V$ such that $X \neq W$, $W \not\in \+ Y$ and $X, W \in \An(\+ Y)_G$. The causal effect $P_{x}(\+ y)$ obtained from $P$ in $G$ is equal to $P^\prime_{x}(\+ y)$ obtained from $P^\prime = P(\An(\+ Y)_G \setminus \{ w \})$ and $G^\prime = L(G[\An(\+ Y)_G], \An(\+ Y)_G \setminus \{ W \})$ if there is no bidirected path from $X$ to any of its children in $G^\prime$.
\end{corollary}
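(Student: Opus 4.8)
The plan is to realize this corollary as a short composition of three results already available: the ancestral reduction of Lemma~\ref{lem:ancestors}, Tian's sufficient condition for identifiability under a singleton intervention (Theorem~\ref{thm:sufficent}), and the invariance of identification under latent projection (Lemma~\ref{lem:latent}). The driving observation is that the hypothesis placed on $G^\prime$ is exactly what Theorem~\ref{thm:sufficent} needs to certify identifiability in the projection, while Lemma~\ref{lem:latent} guarantees that the identifying expression obtained in $G^\prime$ coincides with the one obtained in $G[\An(\+ Y)_G]$, and Lemma~\ref{lem:ancestors} links this back to the original effect in $G$.

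First I would set $H = G[\An(\+ Y)_G]$ and invoke Lemma~\ref{lem:ancestors}: since $X \in \An(\+ Y)_G$ we have $x \cap \An(\+ Y)_G = x$, so $P_x(\+ y)$ obtained from $P$ in $G$ equals the effect obtained from $P(\An(\+ Y)_G)$ in $H$. Next I would check that the projection $G^\prime = L(H, \An(\+ Y)_G \setminus \{ W \})$ contains only ancestors of $\+ Y$: because latent projection preserves the directed-reachability relations among the retained observed vertices (the same fact used in the proofs of Theorem~\ref{thm:Xancestors} and Lemma~\ref{lem:latent}) and every vertex of $H$ is already an ancestor of $\+ Y$, it follows that $G^\prime = G^\prime[\An(\+ Y)_{G^\prime}]$. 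Note also that $W \not\in \+ Y$ and $X \neq W$ ensure both $\+ Y$ and $X$ survive the projection. Consequently the corollary's hypothesis, namely no bidirected path from $X$ to any of its children in $G^\prime$, is precisely the premise of Theorem~\ref{thm:sufficent} applied to $G^\prime$, so $P_x(\+ y)$ is identifiable from $P^\prime = P(\An(\+ Y)_G \setminus \{ w \})$ in $G^\prime$.

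Finally I would transfer this back. Applying Lemma~\ref{lem:latent} with the graph $H$ and the projected set $\+ Z = \{ W \}$ shows that identifiability from $P^\prime$ in $G^\prime = L(H, \An(\+ Y)_G \setminus \{ W \})$ is equivalent to identifiability from $P^\prime$ in $H$ and, crucially, yields the same expression. Since the true causal effect is a property of the underlying model rather than of the graph we reason in, this common expression must evaluate to $P_x(\+ y)$. Chaining the equalities then gives that the effect obtained from $P^\prime$ in $G^\prime$ equals the effect obtained from $P(\An(\+ Y)_G)$ in $H$, which by Lemma~\ref{lem:ancestors} equals $P_x(\+ y)$ obtained from $P$ in $G$.

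I expect the main obstacle to be bookkeeping in the final step rather than any deep argument: one must be careful that identifying the effect with $W$ made latent (that is, from the marginal $P^\prime$) recovers the same numerical quantity as identification with $W$ observed, and that the assumptions $W \not\in \+ Y$ and $X \neq W$ are exactly what make $P_x(\+ y)$ insensitive to marginalizing out $W$. The other point requiring care is confirming that Theorem~\ref{thm:sufficent}, as stated for $G[\An(\+ Y)_G]$, applies verbatim to $G^\prime$; this is resolved once we verify that $G^\prime$ already equals its own $\+ Y$-ancestral subgraph, so that the bidirected-path condition on $G^\prime$ is the condition the theorem requires.
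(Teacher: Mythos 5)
Your proposal is correct and follows essentially the same route as the paper's own proof: Theorem~\ref{thm:sufficent} certifies identifiability of the effect in the projection $G^\prime$, Lemma~\ref{lem:latent} transfers this to $G[\An(\+ Y)_G]$ with the same expression, and the observation that identifiability from the marginal $P^\prime$ implies identifiability from the full distribution, combined with Lemma~\ref{lem:ancestors}, links the result back to $P_{x}(\+ y)$ obtained from $P$ in $G$. The only difference is presentational: you verify explicitly the bookkeeping the paper leaves implicit, namely that $G^\prime$ equals its own $\+ Y$-ancestral subgraph so Theorem~\ref{thm:sufficent} applies verbatim, and that $X$ and $\+ Y$ survive the projection.
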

\begin{proof} 
By Theorem~\ref{thm:sufficent} the causal effect $P^\prime_{x}(\+ y)$ is identifiable from $P^\prime$ in $G^\prime$. By Lemma~\ref{lem:latent} $P^\prime_{x}(\+ y)$ is now identifiable from $P^\prime$ in $G$. $P_{x}(y)$ obtained from $P^\prime$ in $G$ is equal to $P^*_{x}(y)$ obtained from $P^* = P(\An(\+ Y)_G)$ in $G[\An(\+ Y)]$ since identifiability from $P^\prime$ implies identifiability from $P$. Finally, $P^*_{x}(\+ y)$ obtained from $P^* = P(\An(\+ Y)_G)$ in $G[\An(\+ Y)_G]$ is equal to $P_{x}(y)$ obtained from $P$ in $G$.
\end{proof}

We continue with an example on how Corollary~\ref{cor:latent} can be applied in practice. We consider the causal effect of $X$ on $Y$ in the graph $G$ of Figure~\ref{fig:cor3graph}(\subref{fig:cor3graph_start}).
\begin{figure}[h]
  \begin{subfigure}[t]{0.5\textwidth}
  \centering
\begin{tikzpicture}[scale=0.65]
\node [dot = {0}{0}{X}{above left}] at (0,4) {};
\node [dot = {0}{0}{Y}{below}] at (2,0) {};
\node [dot = {-0.15}{0.05}{Z_1}{above right}] at (1,2) {};
\node [dot = {0}{0}{Z_2}{above right}] at (7,4) {};
\node [dot = {0}{0}{Z_3}{below right}] at (4.5,2) {};

\draw [->] (X)   -- (Z_1);
\draw [->] (Z_1) -- (Y);
\draw [->] (Z_2) -- (X);
\draw [->] (Z_2) -- (Z_3);
\draw [->] (Z_2) -- (Z_1);
\draw [->] (Z_3) -- (Y);

\draw [<->,dashed] (X) to [bend left=45]  (Z_2);
\draw [<->,dashed] (X) to [bend right=45]  (Y);
\draw [<->,dashed] (X) to [bend left=45]  (Z_3);
\draw [<->,dashed] (Z_2) to [bend left=45]  (Y);
\end{tikzpicture}
  \caption{Graph $G$.}
  \label{fig:cor3graph_start}
  \end{subfigure}
  \hspace{0.5cm}
  \begin{subfigure}[t]{0.48\textwidth}
  \centering
\begin{tikzpicture}[scale=0.65]
\node [dot = {0}{0}{X}{above left}] at (0,4) {};
\node [dot = {0}{0}{Y}{below}] at (2,0) {};
\node [dot = {-0.15}{0.05}{Z_1}{above right}] at (1,2) {};
\node [dot = {0}{0}{Z_2}{above right}] at (7,4) {};

\draw [->] (X)   -- (Z_1);
\draw [->] (Z_1) -- (Y);
\draw [->] (Z_2) -- (X);
\draw [->] (Z_2) -- (Y);
\draw [->] (Z_2) -- (Z_1);

\draw [<->,dashed] (X) to [bend left=45]  (Z_2);
\draw [<->,dashed] (X) to [bend right=45]  (Y);
\draw [<->,dashed] (Z_2) to [bend left=45]  (Y);
\end{tikzpicture}
  \caption{Latent projection $L(G, \{X,Y,Z_1,Z_2\})$.}
  \label{fig:cor3graph_latent}
  \end{subfigure}
  \caption{A graph for an example where Corollary~\ref{cor:latent} allows us to make variable $Z_3$ latent.}
  \label{fig:cor3graph}
\end{figure}
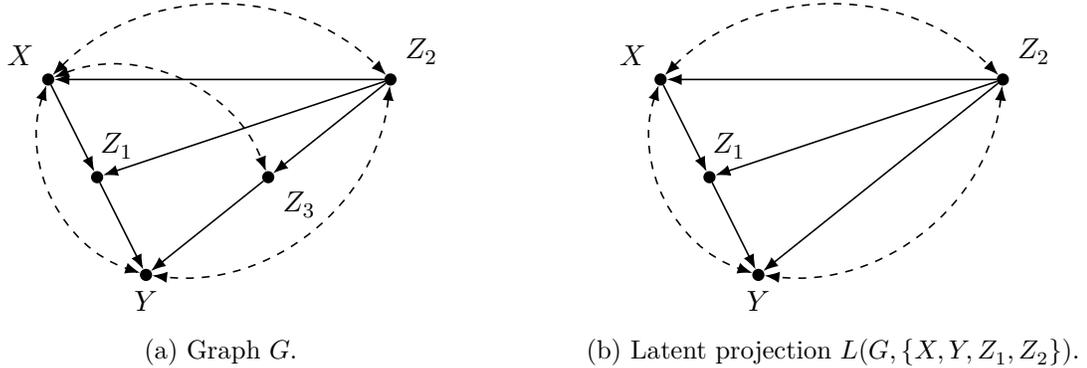
\noindent
The causal effect is identifiable and the output of the ID algorithm is 
\begin{align*}
&\sum_{z_2,z_3,z_1}P(z_1|z_2,x)\frac{\left(\sum_{x^\prime}P(y|z_2,x^\prime,z_3,z_1)P(z_3|z_2,x^\prime)P(x^\prime|z_2)P(z_2)\right)}{\left(\sum_{x^\prime,y^\prime}P(y^\prime|z_2,x^\prime,z_3,z_1)P(z_3|z_2,x^\prime)P(x^\prime|z_2)P(z_2)\right)} \times \\
  &\qquad\left(\sum_{x^\prime,z_3,y^\prime}P(y^\prime|z_2,x^\prime,z_3,z_1)P(z_3|z_2,x^\prime)P(x^\prime|z_2)P(z_2)\right)P(z_3|z_2).
\end{align*}
\noindent
We may apply Corollary~\ref{cor:latent} by noting that $G = G[\An(Y)_G]$ and that there is no bidirected path between $X$ and its only child $Z_1$ in the latent projection $L(G, \+ V \setminus \{Z_3\})$ as depicted in Figure~\ref{fig:cor3graph}(\subref{fig:cor3graph_latent}).

Running the ID algorithm in $L(G, \+ V \setminus \{Z_3\})$ results in the following expression
\[
  \sum_{z_2,z_1}\left(\sum_{x^\prime}P(y|z_2,x^\prime,z_1)P(x^\prime|z_2)P(z_2)\right)P(z_1|z_2,x).
\]

\section{Pruning Identifiability Algorithm} \label{sect:augmented}

Corollaries~\ref{cor:removal}, \ref{cor:removalsingle} and \ref{cor:latent} can be implemented as additional steps for the ID algorithm. For Algorithm~\ref{alg:aug_identify}, line 3 implements Corollary~\ref{cor:removal}, line 4 implements Corollary~\ref{cor:removalsingle} and line 5 implements Corollary~\ref{cor:latent}. Other lines are identical to the ID algorithm. This algorithm is provided by the R package \emph{causaleffect} which implements various causal inference algorithms such as the original ID algorithm \citep{tikka17a}. 

The ordering of the variables in the loop on line 4 has no effect on the resulting expression, but the ordering does matter on line 5. Choosing a different ordering may lead to a different expression. For example, when identifying the causal effect of $X$ on $Y$ in graph $G$ of Figure~\ref{fig:ordering} one may obtain either the back-door formula or the front-door formula by proceeding in either the topological ordering or the reverse-topological ordering of the vertices, respectively.

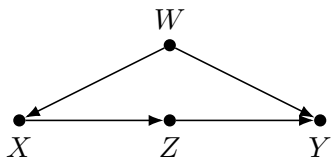
\begin{figure}[h]
  \centering
\begin{tikzpicture}[scale=2.0]
\node [dot = {0}{0}{X}{below}] at (0,0) {};
\node [dot = {0}{0}{Z}{below}] at (1,0) {};
\node [dot = {0}{0}{Y}{below}] at (2,0) {};
\node [dot = {0}{0}{W}{above}] at (1,0.5) {};

\draw [->] (X) -- (Z);
\draw [->] (Z) -- (Y);
\draw [->] (W) -- (X);
\draw [->] (W) -- (Y);
\end{tikzpicture}
  \caption{Graph $G$ where different latent projections lead to different expressions for $P_x(y)$.}
  \label{fig:ordering}
\end{figure}

\begin{algorithm}[H]
  \begin{algorithmic}[1]
  \INPUT{Value assignments $\+ x$ and $\+ y$, joint distribution $P(\+ v)$ and an SMG $G = \langle \+ V, \+ E \rangle$. $G$ is an $I$-map of $P$.}
    \OUTPUT{Expression for $P_{\+ x}(\+ y)$ in terms of $P(\+ v)$ or \textbf{FAIL}$(F,F^\prime)$.}
    \Statex
    \Statex \textbf{function}{ \textbf{PID}$(\+ y, \+ x, P, G)$ }
      \State \textbf{if}{ $\+ x = \emptyset$, }
      \Statex \quad \textbf{return} $\sum_{v \in \+ v \setminus \+ y}P(\+ v)$.
      \State \textbf{if}{ $\+ V \neq \An(\+ Y)_G$, }
        \Statex \quad \textbf{return} \textbf{PID}$(\+ y, \+ x \cap \An(\+ y)_G, P(\An(\+ Y)_G), G[\An(\+ Y)_G])$
      \State \textbf{let}{ $\+ Z = \An(\+ Y)_G \setminus \Co(\+ Y)_{G_{\thickbar{\+ X}}}$.}
      \Statex \textbf{if}{ $\+ Z \neq \emptyset $ } \textbf{and}{ $G[\+ V \setminus \+ Z] = L(G, \+ V \setminus \+ Z)$, }
        \Statex \quad \textbf{return} \textbf{PID}$(\+ y, \+ x \setminus \+ z, P(\+ V \setminus \+ Z), G[\+ V \setminus \+ Z])$
      \For{$W \in \+ V \setminus \+ X$}
        \Statex \quad \textbf{let}{ $\+ R = \An(W)_{G_{\thickbar{\+ X}}} \setminus \De(\+ X)_G$.}
        \Statex \quad \textbf{let}{ $\+ T = \+ T \cup (\+ R \setminus \Co(\+ V \setminus \+ R)_{G_{\thickbar{W}}})$.}
      \EndFor
        \Statex \textbf{if}{ $\+ T \neq \emptyset$, }
        \Statex \quad \textbf{return} \textbf{PID}$(\+ y, \+ x, P(\+ V \setminus \+ T), G[\+ V \setminus \+ T])$
      \State \textbf{if}{ $\+ X = \{X\}$, }
        \Statex \quad \textbf{let}{ $G[\+ S_X] \in C(G), X \in \+ S_X$.}
        \Statex \quad \textbf{if}{ $\Ch(\+ X)_{G[\+ S_X]} \setminus \+ X = \emptyset$,}
        \Statex \quad \quad \textbf{for}{ $W \in \+ V \setminus (\+ Y \cup \+ X)$ \textbf{do}}
        \Statex \quad \quad \quad \textbf{let}{ $G^\prime = L(G, \+ V \setminus \{W\}).$}
        \Statex \quad \quad \quad \textbf{let}{ $G^\prime[\+ S^\prime_X] \in C(G^\prime), X \in \+ S^\prime_X$.}
        \Statex \quad \quad \quad \textbf{if}{ $\Ch(\+ X)_{G^\prime[\+ S^\prime_X]} \setminus \+ X = \emptyset$, }
        \Statex \quad \quad \quad \quad $P \gets P(\+ V \setminus \{W\})$.
        \Statex \quad \quad \quad \quad $G \gets G^\prime$.
        \Statex \quad \quad \quad \quad $\+ V \gets \+ V \setminus \{W\}$.
      \State \textbf{let}{ $\+ W = (\+ V \setminus \+ X) \setminus \An(\+ Y)_{G_{\thickbar {\+ X}}}$.} 
      \Statex \textbf{if}{ $\+ W \neq \emptyset $, }
      \Statex \quad \textbf{return} \textbf{PID}$(\+ y, \+ x \cup \+ w, P, G)$.
      \State \textbf{if}{ $C(G[\+ V \setminus \+ X]) = \{G[\+ S_1], \ldots,G[\+ S_k]\}$, }
        \Statex \quad \textbf{return}{ $\sum_{v \in \+ v \setminus (\+ y \cup \+ x)} \prod_{i=1}^k$ \textbf{PID}($\+ s_i, \+ v \setminus \+ s_i, P, G)$.}
      \Statex \textbf{if}{ $C(G[\+V \setminus \+ X]) = \{G[\+ S]\}$,}
        \State \quad \textbf{if}{ $C(G) = \{G\}$,}
        \Statex \quad \quad \textbf{throw FAIL}{$(G, G[\+ S])$.}
        \State \quad \textbf{if}{ $G[\+ S] \in C(G)$,}
        \Statex \quad \quad \textbf{return} {$\sum_{v \in \+ s \setminus \+ y} \prod_{V_i \in \+ S}{P(v_i \vert v_\pi^{(i-1)})}$.}
        \State \quad \textbf{if}{ $(\exists \+ S^\prime)\+ S \subset \+ S^\prime \textbf{ such that } G[\+ S^\prime] \in C(G)$,}
        \Statex \quad \quad \textbf{return} {\textbf{PID}$(\+ y, \+ x \cap \+ s^\prime, \prod_{V_i \in \+ S^\prime}{P(V_i \vert V_\pi^{(i-1)} \cap \+ S^\prime,v_\pi^{(i-1)} \setminus \+ s^\prime), G[\+ S^\prime}])$.}
  \end{algorithmic}
  \caption{A pruning identifiability algorithm (PID) for causal effects.}
  \label{alg:aug_identify}
\end{algorithm}

On line 5 we first check whether the set $\+ X$ is a singleton. If so, we determine whether any children of $X$ belong the same C-component as $X$. If no such children exist, we iterate over the possible latent projections in an attempt to find one that does not induce a bidirected path between $X$ and any of its children in the projection. After the new pruning steps have been carried out, we attempt identification using the original formulation of the ID algorithm.


We return to the example presented in the introduction and show how Algorithm~\ref{alg:aug_identify} operates to derive the expression for $P_{x}(y)$ in the graph of Figure~\ref{fig:intro}(\subref{fig:intro_start}). We choose the topological ordering to be $Y > Z_1 > X > Z_3 > W_2 > Z_4 > Z_2 > W_1$. We begin on line 3, since $\+ Z = \An(Y)_G \setminus \Co(Y)_{G_{\thickbar{X}}} = \+ V \setminus \{W_1,W_2\}$ and continue by calling $\Call{PID}{y, x, P(\+ V \setminus \{W_1,W_2\}), G[\+ V \setminus \{W_1,W_2\}]}$. In the presentation below, $\+ V$ and $G$ refer to the set of vertices and graph in the current call of PID, respectively.

Next we enter the loop on line 4. When $W = Z_1$ we obtain $\+ R = \An(Z_1)_{G_{\thickbar{X}}} \setminus \De(X)_G = \{Z_2, Z_4\}$ and $\+ R \setminus \Co(\+ V \setminus \+ R)_{G_{\thickbar{Z_1}}} = \{Z_2, Z_4\} \setminus \{Z_2\} = \{Z_4\}$ since $Z_4$ is an ancestor of $Z_1$ and it is disconnected from other vertices in $G_{\thickbar{Z_1}}$. Other choices of $W$ result in an empty set. When the loop is completed we have $\+ T = \{Z_4\}$ and we continue by calling $\Call{PID}{y, x, P(\+ V \setminus \{Z_4\}), G[\+ V \setminus \{Z_4\}]}$. 

Since $\+ X$ is a singleton we end up on line 5 and find no children of $X$ in the same C-component as $X$. We assume a reverse-topological ordering for the loop and begin with the latent projection $L(G, \+ V \setminus \{Z_2\})$. This projection creates a bidirected edge between $X$ and $Z_1$ bringing them into the same C-component in the projection. Thus we continue with $L(G, \+ V \setminus \{Z_1\})$. This projection is also unsuitable, since $Y$ is a child of $X$ in the projection and there is a bidirected arc connecting them. We continue with $L(G, \+ V \setminus \{Z_3\})$ and find that $X$ is not connected to its children via bidirected paths in the projection. Thus we set $P \gets P(\+ V \setminus \{Z_3\})$, $G \gets L(G, \+ V \setminus \{Z_3\})$ and $\+ V \gets \+ V \setminus \{Z_3\}$. This projection is identical to the graph depicted in Figure~\ref{fig:cor3graph}(\subref{fig:cor3graph_latent}). After these steps, only lines of the original ID algorithm are called, which results in the expression
\[
  \sum_{z_2,z_1}\left(\sum_{x^{\prime}}P(y|z_2,z_1,x^{\prime})P(x^{\prime}|z_2)P(z_2)\right)P(z_1|z_2,x).
\]

\section{Examples on Recursive Pruning} \label{sect:examples}
Corollaries~\ref{cor:extend}, \ref{cor:removalsingle} and \ref{cor:latent} often provide direct benefits when applied before the ID algorithm. The following examples show why they are also useful as recursive steps as implemented in the PID algorithm.

We are tasked with identifying the causal effect of $X$ on $Y$ in graph $G$ depicted in Figure~\ref{fig:recursive1}(\subref{fig:recursive1_start})

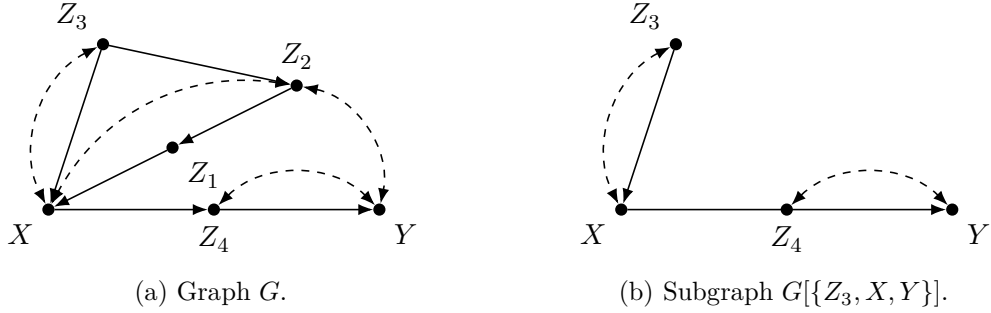
\begin{figure}[h]
  \begin{subfigure}[t]{0.5\textwidth}
  \centering
\begin{tikzpicture}[scale=2.2]
\node [dot = {0}{0}{Z_1}{below right}] at (0.75,0.375) {};
\node [dot = {0}{0}{Z_2}{above}] at (1.5,0.75) {};
\node [dot = {0}{0}{Z_3}{above left}] at (0.33,1) {};
\node [dot = {0}{0}{Z_4}{below}] at (1,0) {};
\node [dot = {0}{0}{X}{below left}] at (0,0) {};
\node [dot = {0}{0}{Y}{below right}] at (2,0) {};

\draw [->] (Z_2) -- (Z_1);
\draw [->] (Z_1) -- (X);
\draw [->] (X)   -- (Z_4);
\draw [->] (Z_4) -- (Y);
\draw [->] (Z_3) -- (Z_2);
\draw [->] (Z_3) -- (X);

\draw [<->,dashed] (Z_2) to [bend right=35]  (X);
\draw [<->,dashed] (Z_2) to [bend left=45]   (Y);
\draw [<->,dashed] (Z_3) to [bend right=45]  (X);
\draw [<->,dashed] (Z_4) to [bend left=45]  (Y);
\end{tikzpicture}
  \caption{Graph $G$.}
  \label{fig:recursive1_start}
  \end{subfigure}
  \hspace{0.5cm}
  \begin{subfigure}[t]{0.4\textwidth}
  \centering
\begin{tikzpicture}[scale=2.2]
\node [dot = {0}{0}{Z_3}{above left}] at (0.33,1) {};
\node [dot = {0}{0}{Z_4}{below}] at (1,0) {};
\node [dot = {0}{0}{X}{below left}] at (0,0) {};
\node [dot = {0}{0}{Y}{below right}] at (2,0) {};

\draw [->] (X)   -- (Y);
\draw [->] (Z_3) -- (X);

\draw [<->,dashed] (Z_3) to [bend right=45]  (X);
\draw [<->,dashed] (Z_4) to [bend left=45]  (Y);
\end{tikzpicture}
  \caption{Subgraph $G[\{Z_3,X,Y\}]$.}
  \label{fig:recursive1_sub}
  \end{subfigure}
  \caption{A graph for the example of recursive application of Corollary~\ref{cor:removal} within the ID algorithm.}
  \label{fig:recursive1}
\end{figure}
\noindent
Initially there are no vertices that are connected to other vertices only through $X$. As a recursive step of the ID algorithm, we are tasked with identifying
$P^\prime_{z_3,x}(y)$ from $P^\prime$, where
\[
P^\prime = \sum_{z_2} P(y|z_3,z_2,z_1,x)P(x|z_3,z_2,z_1)P(z_2|z_3)P(z_3),
\]
in the subgraph $G[\{Z_3,X,Y\}]$ shown in Figure~\ref{fig:recursive1}(\subref{fig:recursive1_sub}). In this graph $Z_3$ is connected to other vertices only through $X$ and it can be removed according to Corollary~\ref{cor:removal}, since the corresponding latent projection is the subgraph $G[\{X,Y\}]$. Thus we sum out $Z_3$ from $P^\prime$ and the resulting expression for the causal effect is
\[
\sum_{z_4} \frac{\sum_{z_2,z_3} P(y|z_3,z_2,z_1,x,z_4)P(z_4|z_3,z_2,z_1,x)P(x|z_3,z_2,z_1)P(z_2|z_3)P(z_3)}{\sum_{z_2,z_3,y^\prime} P(y^\prime|z_3,z_2,z_1,x,z_4)P(z_4|z_3,z_2,z_1,x)P(x|z_3,z_2,z_1)P(z_2|z_3)P(z_3)}.
\]
If Corollary~\ref{cor:removal} is not applied at this stage, the final expression is instead
\begin{align*}
& \sum_{z_4} \left(
  \frac{\sum_{z_2} P(y|z_3,z_2,z_1,x,z_4)P(z_4|z_3,z_2,z_1,x)P(x|z_3,z_2,z_1)P(z_2|z_3)P(z_3)}{\sum_{z_2,y^\prime} P(y^\prime|z_3,z_2,z_1,x,z_4)P(z_4|z_3,z_2,z_1,x)P(x|z_3,z_2,z_1)P(z_2|z_3)P(z_3)} \right. \times \\
  & \quad \left. \frac{\sum_{z_2,y^\prime} P(y^\prime|z_3,z_2,z_1,x,z_4)P(z_4|z_3,z_2,z_1,x)P(x|z_3,z_2,z_1)P(z_2|z_3)P(z_3)}{\sum_{z_2,y^\prime,z_4} P(y^\prime|z_3,z_2,z_1,x,z_4)P(z_4|z_3,z_2,z_1,x)P(x|z_3,z_2,z_1)P(z_2|z_3)P(z_3)} \right).
\end{align*}
Using Corollary~\ref{cor:removal} provides us with a simpler expression in this situation by completely removing the second term from the product inside the summation.

Next we show how Corollary~\ref{cor:removalsingle} can also be applied at a recursive step. Our interest lies in the causal effect of $X$ on $Y$ in graph $G$ of Figure~\ref{fig:recursive2}(\subref{fig:recursive2_start})

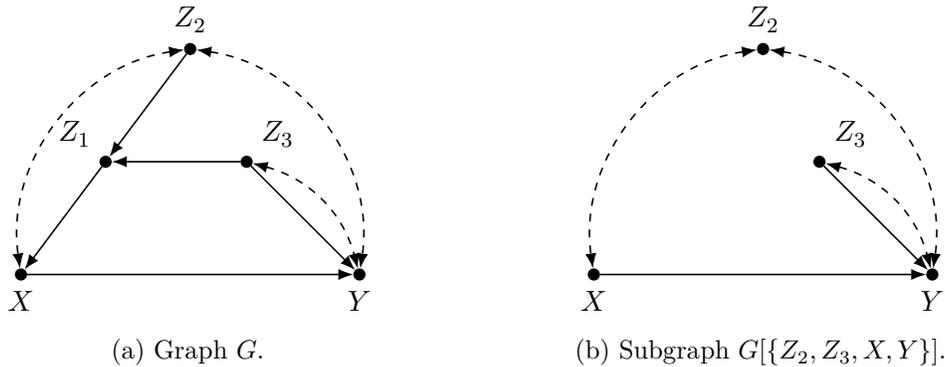
\begin{figure}[h]
  \begin{subfigure}[t]{0.5\textwidth}
  \centering
\begin{tikzpicture}[scale=1.5]
\node [dot = {0}{0}{Z_1}{above left}] at (0.75,1) {};
\node [dot = {0}{0}{Z_2}{above}] at (1.5,2) {};
\node [dot = {0}{0}{Z_3}{above right}] at (2,1) {};
\node [dot = {0}{0}{X}{below}] at (0,0) {};
\node [dot = {0}{0}{Y}{below}] at (3,0) {};

\draw [->] (Z_2) -- (Z_1);
\draw [->] (Z_1) -- (X);
\draw [->] (X)   -- (Y);
\draw [->] (Z_3) -- (Z_1);
\draw [->] (Z_3) -- (Y);

\draw [<->,dashed] (Z_2) to [bend right=45]  (X);
\draw [<->,dashed] (Z_2) to [bend left=45]   (Y);
\draw [<->,dashed] (Z_3) to [bend left=30]   (Y);
\end{tikzpicture}
  \caption{Graph $G$.}
  \label{fig:recursive2_start}
  \end{subfigure}
  \hspace{0.5cm}
  \begin{subfigure}[t]{0.4\textwidth}
  \centering
\begin{tikzpicture}[scale=1.5]
\node [dot = {0}{0}{Z_2}{above}] at (1.5,2) {};
\node [dot = {0}{0}{Z_3}{above right}] at (2,1) {};
\node [dot = {0}{0}{X}{below}] at (0,0) {};s
\node [dot = {0}{0}{Y}{below}] at (3,0) {};

\draw [->] (X)   -- (Y);
\draw [->] (Z_3) -- (Y);

\draw [<->,dashed] (Z_2) to [bend right=45]  (X);
\draw [<->,dashed] (Z_2) to [bend left=45]   (Y);
\draw [<->,dashed] (Z_3) to [bend left=30]   (Y);
\end{tikzpicture}
  \caption{Subgraph $G[\{Z_2,Z_3,X,Y\}]$.}
  \label{fig:recursive2_sub}
  \end{subfigure}
  \caption{A graph for the example of recursive application of Corollary~\ref{cor:removalsingle} within the ID algorithm.}
  \label{fig:recursive2}
\end{figure}
\noindent
There are no vertices that are connected to other vertices in the graph via a single vertex. When the ID algorithm is applied we eventually reach a step where the causal effect of $P^{\prime}_{z_2,x}(y)$ is to be identified from $P^\prime$, where
\[
  P^\prime = P(y|z_2,z_3,z_1,x)P(x|z_2,z_3,z_1)P(z_3|z_2)P(z_2),
\]
in the subgraph $G[\{Z_2,Z_3,X,Y\}]$ which is depicted in Figure~\ref{fig:recursive2}(\subref{fig:recursive2_sub}).

In this graph $Z_3$ can be removed according to Corollary~\ref{cor:removalsingle}, since $Z_3$ is connected to other vertices of the subgraph only through $Y$. The resulting distribution is obtained by summing out $Z_3$ from $P^\prime$. The final expression for the causal effect is now
\begin{equation} \label{eq:simplifiable}
  \frac{\sum_{z_2,z_3}P(y|z_2,z_3,z_1,x)P(x|z_2,z_3,z_1)P(z_3|z_2)P(z_2)}{\sum_{z_2,z_3,y^\prime}P(y^\prime|z_2,z_3,z_1,x)P(x|z_2,z_3,z_1)P(z_3|z_2)P(z_2)}.
\end{equation}
If Corollary~\ref{cor:removalsingle} is not applied, the resulting expression is instead
\begin{align*}
&\sum_{z_3} \left( \frac{\sum_{z_2}P(y|z_2,z_3,z_1,x)P(x|z_2,z_3,z_1)P(z_3|z_2)P(z_2)}{\sum_{z_2,y^\prime}P(y^\prime|z_2,z_3,z_1,x)P(x|z_2,z_3,z_1)P(z_3|z_2)P(z_2)} \times \right. \\
&\quad \left. \vphantom{\sum_z} \sum_{z_2,x,y^\prime} P(y^\prime|z_2,z_3,z_1,x)P(x|z_2,z_3,z_1)P(z_3|z_2)P(z_2) \right).
\end{align*}
As in the previous example, the benefit of applying Corollary~\ref{cor:removalsingle} is apparent. 

We can also take advantage of latent projections recursively via Corollary~\ref{cor:latent} as shown in the next example. Our interest lies in the causal effect of $X$ on $Y$ in graph $G$ of Figure~\ref{fig:recursive3}.

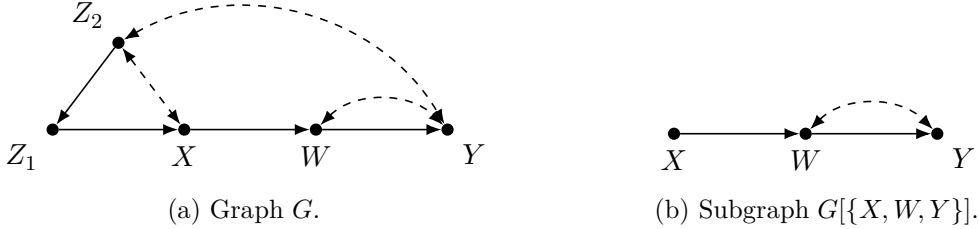
\begin{figure}[h]
  \begin{subfigure}[t]{0.5\textwidth}
  \centering
\begin{tikzpicture}[scale=1.75]
\node [dot = {0}{0}{Z_1}{below left}] at (-1,1) {};
\node [dot = {0}{0}{Z_2}{above left}] at (-0.5,1.66) {};
\node [dot = {0}{0}{W}{below}] at (1,1) {};
\node [dot = {0}{0}{X}{below}] at (0,1) {};
\node [dot = {0}{0}{Y}{below right}] at (2,1) {};

\draw [->] (Z_2) -- (Z_1);
\draw [->] (Z_1) -- (X);
\draw [->] (X)   -- (W);
\draw [->] (W)   -- (Y);

\draw [<->,dashed] (Z_2) to [bend right=0] (X);
\draw [<->,dashed] (Z_2) to [bend left=45]  (Y);
\draw [<->,dashed] (W)   to [bend left=45]  (Y);
\end{tikzpicture}
  \caption{Graph $G$.}
  \label{fig:recursive3_start}
  \end{subfigure}
  \hspace{0.5cm}
  \begin{subfigure}[t]{0.4\textwidth}
  \centering
\begin{tikzpicture}[scale=1.75]
\node [dot = {0}{0}{W}{below}] at (1,1) {};
\node [dot = {0}{0}{X}{below}] at (0,1) {};
\node [dot = {0}{0}{Y}{below right}] at (2,1) {};

\draw [->] (X)   -- (W);
\draw [->] (W)   -- (Y);

\draw [<->,dashed] (W)   to [bend left=45]  (Y);
\end{tikzpicture}
  \caption{Subgraph $G[\{X,W,Y\}]$.}
  \label{fig:recursive3_sub}
  \end{subfigure}
  \caption{A graph for the example of recursive application of Corollary~\ref{cor:latent} within the ID algorithm.}
  \label{fig:recursive3}
\end{figure}
\noindent
Here $X$ is connected to its child $W$ via a bidirected path and thus they belong to the same C-component rendering Corollary~\ref{cor:latent} unusable at this time. However, as a recursive step of the ID algorithm we have to identify $P^\prime_{x}(y)$ from $P^\prime$, where
\[
  P^\prime = \sum_{z_2} P(y|z_2,z_1,x,w)P(w|z_2,z_1,x)P(x|z_2,z_1)P(z_2)
\]
in the subgraph $G[\{X,W,Y\}]$. In this subgraph $X$ is not connected to its child $W$ via a bidirected path. We also find that the latent projection $L(G[\{X,W,Y\}], \{X,Y\})$ does not induce such a path between $X$ and $Y$. We can now continue identification in this latent projection and sum out $W$ from $P^\prime$. The resulting expression for the causal effect is
\[
  \frac{\sum_{z_2,w} P(y|z_2,z_1,x,w)P(w|z_2,z_1,x)P(x|z_2,z_1)P(z_2)}{\sum_{z_2,w, y^\prime} P(y^\prime|z_2,z_1,x,w)P(w|z_2,z_1,x)P(x|z_2,z_1)P(z_2)},
\]
whereas the expression without applying the corollary is instead
\begin{align*}
  & \sum_{w} \left( \frac{\sum_{z_2} P(y|z_2,z_1,x,w)P(w|z_2,z_1,x)P(x|z_2,z_1)P(z_2)}{\sum_{z_2,y^\prime} P(y^\prime|z_2,z_1,x,w)P(w|z_2,z_1,x)P(x|z_2,z_1)P(z_2)} \right. \times \\
  & \quad \frac{\sum_{z_2,y^\prime} P(y^\prime|z_2,z_1,x,w)P(w|z_2,z_1,x)P(x|z_2,z_1)P(z_2)}{\sum_{z_2,w^\prime,y^\prime} P(y^\prime|z_2,z_1,x,w)P(w^\prime|z_2,z_1,x)P(x|z_2,z_1)P(z_2)}.
\end{align*}
Additional examples are provided as an R script \citep{Rsoft} at the JMLR online paper repository. The script also includes all of the examples presented in this paper.

An interesting question is how pruning works together with simplification  presented in \citep{tikka17b}. We return to the example on identifying the causal effect of $X$ on $Y$ in the graph of Figure~\ref{fig:recursive2}. If we apply the ID algorithm without pruning and perform simplification as a post-processing step, then the resulting expression is
\begin{equation} \label{eq:simplifiable2}
\sum_{z_3}\frac{\sum_{z_2}P(y|z_2,z_3,z_1,x)P(x|z_2,z_3,z_1)P(z_3|z_2)P(z_2)}{\sum_{z_2}P(x|z_2,z_3,z_1)P(z_3|z_2)P(z_2)}P(z_3|z_2).
\end{equation}
This expression is in some aspects simpler than expression \eqref{eq:simplifiable} obtained using pruning alone, but does contain a sum over $Z_3$ that was not originally present.

When pruning is introduced to the ID algorithm and simplification is again applied, the resulting expression is instead
\[
\frac{\sum_{z_3,z_2}P(y|z_2,z_3,z_1,x)P(x|z_2,z_3,z_1)P(z_3|z_2)P(z_2)}{\sum_{z_2}P(x|z_2,z_1)P(z_2)},
\]
which is noticeably simpler than expressions \eqref{eq:simplifiable} and \eqref{eq:simplifiable2}. This example shows, that when pruning methods are employed together with simplification, a simpler expression can be reached than what is possible with either pruning or simplification alone.

\section{Discussion} \label{sect:disc}

We have presented criteria for removing variables from causal models that are not necessary to achieve identifiability for a given causal effect, and showed how these criteria can be applied in practice. We integrated our results into a new version of the ID algorithm called PID as presented in Algorithm~\ref{alg:aug_identify} to facilitate automatic processing of identifiability queries. It should be noted that the ID algorithm already performs some pruning such as removing non-ancestors of $\+ Y$.

The pruning operations carried out by Algorithm~\ref{alg:aug_identify} can significantly simplify the resulting expression compared to the traditional ID algorithm. Benefits of simplification can be realized in various settings. A simpler expression is easier to understand and to evaluate, since the dimensionality of the problem has been reduced. This is especially true for settings where the expression has to be evaluated repeatedly. Simplification can also help dealing with data where some variables are affected by bias or contain missing data. Obtaining an expression that does not contain these variables has clear advantages. It may also be of interest to obtain a different expression for the same causal effect.

The choice of variable ordering on line 5 of Algorithm~\ref{alg:aug_identify} is not arbitrary. However, available external knowledge may guide our selection to prefer certain orderings. For example, in a situation where two latent projections are mutually exclusive, we may prefer an ordering where the variable that is associated with the smallest cost, or of which we have the most accurate measurements is not considered latent.

When PID is applied in conjunction with simplification methods described in \citep{tikka17b}, various situations that lead to complex expressions can be taken into account. These methods complement each other, since the results in this paper deal with completely removing variables from the resulting expression, whereas the simplification methods focus on symbolic summation of so-called atomic expressions, which are expression consisting of a single sum and a number of product terms. An expression for a causal effect may consist of multiple atomic expressions, some of which can be simplified and some of which can not.

We showed via examples that our improvements are not simply pre-processing steps to be carried out before calling the ID algorithm, but actually provide significant benefits when applied recursively. As the ID algorithm manipulates the original graph it often enables the application of our results as well. As hedges characterize identifiability, it is possible to consider latent projections in a more general manner, but this is not necessarily beneficial for simplification. One could construct an algorithm that performs a search over the possible subsets of $\+ V$, and checks whether identifiability is retained in the corresponding latent projection. However, as we have shown via examples, this may not be enough to obtain a simpler expression, and the recursive structure of the ID algorithm needs to be taken advantage of. Instead, we could consider a variant of the PID algorithm, where line 5 is replaced by this procedure. However, one must be careful when applying this method, so that the computation does not become intractable when the number of vertices increases due to the complexity of the search.

\appendix

\acks{We wish to thank Professor Jukka Nyblom for his comments that greatly helped to improve this paper. We also thank the anonymous reviewers for their insightful feedback. The work belongs to the profiling area ''Decision analytics utilizing causal models and multiobjective optimization'' (DEMO) supported by Academy of Finland (grant number 311877).}

\vskip 0.2in
\bibliography{improvements}

\end{document}